\DeclareMathOperator{\E}{\mathbb{E}}
\DeclareMathOperator*{\argmin}{arg\,min}
\newtheorem{mytheorem}{Theorem}
\newtheorem{lemma}{Lemma}
\newtheorem{assumption}{Assumption}
\newsavebox\thmbox
\newenvironment{theorem}%
  {\begin{lrbox}{\thmbox}%
   \begin{minipage}{\dimexpr\linewidth-2\fboxsep}
   \begin{mytheorem}}%
  {\end{mytheorem}%
   \end{minipage}%
   \end{lrbox}%
   \begin{trivlist}
     \item[]\colorbox{white}{\usebox\thmbox}
   \end{trivlist}}
\renewcommand{\S}{\mathcal{S}}
\newcommand{\A}{\mathcal{A}}
\newcommand{\calH}{\mathcal{H}}
\newcommand{\calD}{\mathcal{D}}
\newcommand{\calQ}{\mathcal{Q}}
\newcommand{\calV}{\mathcal{V}}
\newcommand{\calP}{\mathcal{P}}
\newcommand{\calG}{\mathcal{G}}
\newcommand{\calC}{\mathcal{C}}
\newcommand{\el}{\mathscr{L}}
\newcommand{\er}{\mathscr{R}}
\newcommand{\var}{\mathbb{V}}
\renewcommand{\P}{\mathbb{P}}
\newcommand{\Real}{\mathbb{R}}
\icmltitlerunning{Sharp Analysis of Smoothed Bellman Error Embedding}
\begin{document}

\twocolumn[
\icmltitle{Sharp Analysis of Smoothed Bellman Error Embedding}




\begin{icmlauthorlist}
\icmlauthor{Ahmed Touati}{udem}
\icmlauthor{Pascal Vincent$^{\dagger}$}{udem}

\end{icmlauthorlist}

\icmlaffiliation{udem}{Mila, Universit\'e de Montr\'eal. $^{\dagger}$ Canada CIFAR AI Chair}

\icmlcorrespondingauthor{Ahmed Touati}{ahmed.touati@umontreal.ca}

\icmlkeywords{Machine Learning, ICML}

\vskip 0.3in
]



\printAffiliationsAndNotice{}  

\begin{abstract}
The \textit{Smoothed Bellman Error Embedding} algorithm~\citep{dai2018sbeed}, known as SBEED, was proposed as a provably convergent reinforcement learning algorithm with general nonlinear function approximation. It has been successfully implemented with neural networks and achieved strong empirical results. In this work, we study the theoretical behavior of SBEED in batch-mode reinforcement learning. We prove a near-optimal performance guarantee that depends on the representation power of the used function classes and a tight notion of the distribution shift. Our results improve upon prior guarantees for SBEED in ~\citet{dai2018sbeed} in terms of the dependence on the planning horizon and on the sample size. Our analysis builds on the recent work of ~\citet{Xie2020} which studies a related algorithm MSBO, that could be interpreted as a \textit{non-smooth} counterpart of SBEED.
\end{abstract}

\section{Introduction}
In reinforcement learning (RL), an agent interacts with
an unknown environment and seeks to learn a policy
which maps states to distribution over actions to maximise a long-term numerical reward. Recently, many popular off-policy deep RL algorithms have enjoyed many empirical successes on challenging RL domains such as video games and robotics. Their success can be attributed to their ability to scale gracefully to high dimensional state-action spaces thanks to their use of modern high-capacity function approximators such as neural networks. Most of these algorithms have their roots in Approximate Dynamic Programming (ADP) methods~\citep{bertsekas1995dynamic}, which are standard approaches to tackle decision problems with large state space by making successive calls to a supervised learning algorithm. For example, Deep Q-Network (DQN)~\citep{mnih2015human} can be related to Approximate Value Iteration while Soft Actor-Critic (SAC)~\citep{haarnoja2018soft} can be related to Approximate Policy Iteration.

Unfortunately, it is well known that such off-policy methods, when combined with function approximators, fail frequently to converge to a solution and can be even divergent~\citep{baird1995residual, tsitsiklis1997analysis}. Stable approaches have been an active area
of investigation. For example, restrictive function classes such as averagers~\citep{gordon1995stable} or smoothing kernel~\citep{ormoneit2002kernel} were shown to lead to stable learning. Gradient-based temporal difference approaches have been proposed to derive convergent algorithms with linear function approximators but only for off-policy evaluation~\citep{sutton2008convergent, touati2018convergent}.

Recently, \textit{Smoothed Bellman Error Embedding} (SBEED) was introduced in~\citet{dai2018sbeed} as the first provably convergent algorithm with general function approximators.~\citet{dai2018sbeed} leverage Nesterov's smoothing technique and the convex-conjugate trick to derive a primal-dual optimization problem. The algorithm learns the optimal value function and the optimal policy in the primal, and the Bellman residual in the dual.

In this work, we study the theoretical behavior of SBEED in batch-mode reinforcement learning where the algorithm has only access to a fixed dataset of transitions. We prove a near-optimal performance guarantee that depends on the representation power of the function classes we use and a tight notion of the distribution shift. Our results improve upon prior guarantee of SBEED, presented in the original paper~\citet{dai2018sbeed}, in terms of the dependence on the planning horizon and on the sample size. In particular, we show that SBEED enjoys linear dependence on horizon, which is the best that we can hope~\citep{scherrer2012use}, and that the statistical error decreases in the rate $1/\sqrt{n}$ instead of $1/\sqrt[4]{n}$ provided that function classes are rich enough in a sense that we will specify. Our analysis builds on the recent work of~\citet{Xie2020} that studies a related algorithm MSBO, which could be interpreted as a \textit{non-smooth} counterpart of SBEED. However, both algorithms differ in several aspects: SBEED learns jointly the optimal policy and the optimal value function while MSBO learns the optimal $Q$-value function and considers only policies that are greedy with respect to it. Moreover, even as the smoothing parameter goes to zero, SBEED's learning objective does not recover MSBO's objective.

\section{Preliminaries and Setting}
\subsection{Markov Decision Processes} 

We consider a discounted Markov Decision Process (MDP) defined by a tuple $(\S, \A, \gamma, P, R, d_0)$ with state space $\S$, action space $\A$, discount factor $\gamma \in [0, 1)$, transition probabilities $P \in  \Delta(\S)^{\S \times \A}$ mapping state-action pairs to distributions over next states ($\Delta(\cdot)$ denotes the probability simplex), and reward function $R \in  [0, R_{\max} ]^{\S \times \A}$. $d_0 \in \Delta(\S)$ is the initial state distribution. For the sake of clarity, we assume the state and action spaces are finite whose cardinality can be arbitrarily large, but our analysis can be extended to the countable or continuous case. We denote by $\pi(a \mid s)$ the probability of choosing action $a$ in state $s$ under the policy $\pi \in  \Delta(\A)^{\S}$. The performance of a policy $\pi$ represents the expected sum of discounted rewards: $J(\pi) \triangleq \E[ \sum_{t=0}^{\infty} \gamma^t r_t \mid s_0 \sim d_0, \pi]$ where the expectation is taken over trajectories $\{s_0, a_0, r_0, s_1, a_1, r_1, \ldots \}$ induced by the policy in the MDP such that $s_0 \sim d_0(\cdot), a_t \sim \pi(\cdot \mid s_t), r_t = R(s_t, a_t) \text{ and } s_{t+1} \sim P( \cdot \mid s, a)$. Moreover, we define value function $V^\pi(s) \triangleq \E [ \sum_{t=0}^{\infty} \gamma^t r_t \mid s_0=s, \pi]$ and Q-value function $Q^\pi(s, a) \triangleq \E [ \sum_{t=0}^{\infty} \gamma^t r_t \mid (s_0, a_0)=(s, a), \pi]$. These functions take value in $[0, V_{\max}]$ where $V_{\max} \triangleq R_{\max} /(1-\gamma)$.

We define the discounted state occupancy
measure $d^\pi$ induced by a policy $\pi$ as 
\begin{equation*}
    d^\pi(s) \triangleq (1-\gamma) \sum_{t=0}^\infty \gamma^t d^\pi_t(s),
\end{equation*}
where $d^\pi_t(s) \triangleq \text{Pr}(s_t =s \mid s_0 \sim d_0, \pi)$ is the probability that $s_t=s$ after we execute $\pi$ for $t$ steps, starting from initial state $s_0 \sim d_0$. By definition, $d^\pi_0 = d_0$. Similarly, we define $d^\pi(s, a) = d^\pi(s) \cdot \pi(a \mid s)$.
\paragraph{Entropy Regularized MDP:} The idea of entropy regularization has also been widely used in the RL literature. In entropy regularized MDP, also known as soft MDP, we aim at finding the policy $\pi_\lambda^\star \in \Delta(\A)^{\S} $ that maximizes the following objective:
\begin{align}
J_\lambda(\pi) & \triangleq \E \left [ \sum_{t=0}^{\infty} \gamma^t \left (r_t - \lambda \ln \pi(a_t \mid s_t)\right) \mid s_0 \sim d_0, \pi \right ] \nonumber\\
& = J(\pi) + \lambda \E \left [ \sum_{t=0}^\infty \gamma^t \calH ( \pi(\cdot \mid s_t)) \mid s_0 \sim d_0, \pi \right], \nonumber \label{eq: softMDP}
\end{align}
where $\calH(\pi(\cdot \mid s_t)) = -\E_{a \sim \pi(\cdot \mid s)} [\ln\pi(a \mid s)]$ is the Shannon entropy function and $\lambda$ is a regularization parameter.

\subsection{Batch Reinforcement Learning}
We are concerned with the batch RL setting where an agent does not have the ability to interact with the environment, but is instead provided with a batch dataset $\calD = \{ s_i, a_i, r_i, s'_i \}_{i \in [n]}$ such that for every $i \in [n]$,  $(s_i, a_i)$ is an i.i.d sample generated from a data distribution $\mu \in \Delta(\S \times \A)$, $r_i = R(s_i, a_i)$ and $s'_i \sim  P(\cdot \mid s_i, a_i)$.

A typical batch learning algorithm requires the access to a function class $\calQ \subset [0, V_{\max}]^{\S \times \A}$ and aims at computing a near-optimal policy from the data by approximating the optimal action-value function $Q^\star$ with some element $Q$ of $\calQ$ and then outputing the greedy policy with respect to $Q$. Different algorithms suppose access to different function classes. As a further simplification, we assume that all function classes have finite but exponentially large cardinalities. 

\section{Smoothed Bellman Error Embedding}
In this section, we describe the SBEED algorithm and provide insights about its design that will be useful for our subsequent analysis. The next lemma restates Proposition 2 in~\citet{dai2018sbeed} that characterizes the optimal value-function and the optimal policy of the soft MDP.
\begin{lemma}[Temporal consistency Proposition 3 in \citet{dai2018sbeed}] \label{lemma: consistency} The optimal value function $V^\star_\lambda$ and the optimal policy $\pi^\star_\lambda$ of the soft MDP are the unique $(V, \pi)$ pair that satisfies the following equality for all $(s, a) \in \S \times \A:$
\begin{equation*}
    V(s) = R(s, a) + \gamma (\P V)(s, a) - \lambda \ln \pi(a \mid s)
\end{equation*}
where $(\P V)(s, a) \triangleq \E_{s' \sim P(\cdot \mid s, a)}[V(s')]$.
\end{lemma}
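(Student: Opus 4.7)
The plan is to prove the two directions separately: first that the pair $(V^\star_\lambda, \pi^\star_\lambda)$ does satisfy the stated equality, and then that any pair $(V,\pi)$ satisfying it must coincide with $(V^\star_\lambda, \pi^\star_\lambda)$.

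For the existence direction, I would start from the soft Bellman optimality equation
\begin{equation*}
V^\star_\lambda(s) \;=\; \max_{p \in \Delta(\A)}\, \E_{a \sim p}\!\left[R(s,a) + \gamma (\P V^\star_\lambda)(s,a) - \lambda \ln p(a)\right],
\end{equation*}
which follows from standard dynamic programming for entropy-regularized MDPs. The inner maximization is a strictly concave problem over the simplex with a unique Lagrangian solution of Boltzmann form
\begin{equation*}
\pi^\star_\lambda(a\mid s) \;=\; \frac{\exp\!\bigl((R(s,a)+\gamma (\P V^\star_\lambda)(s,a))/\lambda\bigr)}{\sum_{a'} \exp\!\bigl((R(s,a')+\gamma (\P V^\star_\lambda)(s,a'))/\lambda\bigr)}.
\end{equation*}
Taking logarithms and rearranging gives $\lambda \ln \pi^\star_\lambda(a\mid s) = R(s,a) + \gamma (\P V^\star_\lambda)(s,a) - V^\star_\lambda(s)$ for every $(s,a)$, which is exactly the claimed identity.

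For uniqueness, suppose $(V,\pi)$ satisfies the identity pointwise. Then for each $s$ and $a$,
\begin{equation*}
\pi(a\mid s) \;=\; \exp\!\bigl((R(s,a) + \gamma (\P V)(s,a) - V(s))/\lambda\bigr).
\end{equation*}
Summing over $a$ and using $\sum_a \pi(a\mid s) = 1$ forces
\begin{equation*}
V(s) \;=\; \lambda \ln \sum_{a} \exp\!\bigl((R(s,a) + \gamma (\P V)(s,a))/\lambda\bigr) \;\equiv\; (\calT^\star_\lambda V)(s),
\end{equation*}
i.e.\ $V$ is a fixed point of the soft Bellman optimality operator $\calT^\star_\lambda$. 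Since $\calT^\star_\lambda$ is a $\gamma$-contraction in $\ell_\infty$-norm (the log-sum-exp is $1$-Lipschitz in its arguments, and $\P$ is a stochastic kernel), Banach's fixed-point theorem gives a unique fixed point, which must be $V^\star_\lambda$. Plugging $V=V^\star_\lambda$ back into the pointwise expression for $\pi(a\mid s)$ recovers $\pi=\pi^\star_\lambda$.

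The only mildly technical step is the derivation of the Boltzmann maximizer and verifying that $\calT^\star_\lambda$ is indeed a contraction; both are classical and should be dispatched quickly by standard arguments, so I do not expect a genuine obstacle. The rest of the proof is essentially algebraic rearrangement combined with the normalization constraint on~$\pi$.
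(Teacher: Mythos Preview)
Your proof is correct and follows the standard argument for the temporal-consistency characterization of entropy-regularized MDPs: derive the Boltzmann form of the maximizer in the soft Bellman optimality equation for existence, then use normalization of $\pi$ to force $V$ to be a fixed point of the log-sum-exp operator $\calT^\star_\lambda$ and invoke its $\gamma$-contraction for uniqueness.

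There is, however, nothing to compare against in this paper: the authors do not prove Lemma~\ref{lemma: consistency} themselves but simply import it from \citet{dai2018sbeed} (the surrounding text introduces it as ``restat[ing] Proposition~2 in~\citet{dai2018sbeed}''). Your argument is essentially the classical one that the original reference would also give, so you are not taking a different route so much as filling in a proof the present paper deliberately omits.
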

Let $\calC_\lambda^\pi$ denote the \textit{consistency} operator defined for any $(s, a) \in \S \times \A$ by $(\calC_\lambda^\pi V)(s, a) = R(s, a) + \gamma (\P V)(s, a) - \lambda \ln \pi(a \mid s)$.
A natural objective function inspired by Lemma~\ref{lemma: consistency} would be:
\begin{equation} \label{eq: ultimate loss}
    \min_{V \in  \calV, \pi \in \calP} \| V - \calC_\lambda^\pi V\|^2_{2, \mu},
\end{equation}
where $\forall f \in \Real^{\S \times \A},  \| f \|^2_{2, \mu} \triangleq \E_{(s, a) \sim \mu} [f(s, a)^2]$ is the $\mu$-weighted 2-norm. $\calV \subset [0, V_{\lambda, \max}]^{\S \times \A}$, with $V_{\lambda, \max} \triangleq \frac{R_{\max} + \lambda \ln|\A|}{1-\gamma}$, is the class of candidate value functions and $\calP \subset \{ \pi \in \Delta(\A)^{\S}, \| \ln \pi \|_{\infty} \leq V_{\lambda, \max} / \lambda \}$\footnote{According to Lemma~\ref{lemma: consistency}, $0 \geq \lambda \ln \pi^\star_{\lambda}(a \mid s) = R(s, a) + \gamma (\P V^\star_\lambda)(s, a) - V^\star_\lambda(s) \geq - V^\star_\lambda(s) \geq - V_{\lambda, \max} \Rightarrow \| \ln \pi^\star_\lambda\|_\infty \leq V_{\lambda, \max} / \lambda$}, is the class of candidate policies. 
To solve the minimization problem~\eqref{eq: ultimate loss}, one may try to minimize the empirical objective estimated purely from data:
\begin{align*}
\el_{\calD}&(V; V, \pi) \triangleq \\ & \frac{1}{n} \sum_{i=1}^n \Big( V(s_i) - r_i - \gamma V(s'_i) + \lambda \ln \pi(a_i \mid s_i) \Big)^2 
\end{align*}
Due to the inner conditional expectation in~\eqref{eq: ultimate loss}, the expectation of $\el_{\calD}(V; V, \pi)$ over the draw of dataset $\calD$ is different from the original objective $\| V - \calC_\lambda^\pi V\|^2_{2, \mu}$. In particular.
\begin{align} \label{eq: conditional_variance}
    \E[\el_{\calD}&(V; V, \pi)] = \\
    & \| V - \calC_\lambda^\pi V\|^2_{2, \mu} + \gamma^2 \E_{(s,a) \sim \mu}[ \var_{s' \sim P(\cdot \mid s, a)}[V(s')]] \nonumber
\end{align}
To address this issue, also known as the double sampling issue~\citep{baird1995residual}, \citet{dai2018sbeed} use the Frenchel dual trick as well as an appropriate change of variable and derive the following minimax SBEED objective:
\begin{equation} \label{eq: minimax}
    \min_{V \in  \calV, \pi \in \calP} \max_{g \in \calG}\el_{\calD}(V; V, \pi) - \er_{\calD}(g; V, \pi),
\end{equation}
where $\calG \subset [0, 2 V_{\lambda, \max}]^{\S \times \A}$ is a helper function class and
\begin{align*}
    \er_{\calD}(g;& V, \pi) \triangleq \\ & \frac{1}{n} \sum_{i=1}^n \Big( g(s_i, a_i) - r_i - \gamma V(s'_i) + \lambda \ln \pi(a_i \mid s_i)\Big)^2.
\end{align*}

To understand the intuition behind the SBEED objective~\eqref{eq: minimax}, note that if $\calG$ is rich enough, in the regime of infinite amount of data, the minimizer of the regression problem $\min_{g \in \calG} \er_\calD(g; V, \pi)$ converges to $\calC_\lambda^\pi V$, which is the Bayes optimal regressor, and the minimum converges to the conditional variance $\gamma^2 \E_{(s,a) \sim \mu}[ \var_{s' \sim P(\cdot \mid s, a)}[V(s')]]$, which is the optimal Bayes error. This allows the cancellation of the extra conditional variance term in Equation~\eqref{eq: conditional_variance}. Therefore, $\max_{g \in \calG}\el_{\calD}(V; V, \pi) - \er_{\calD}(g; V, \pi)$ is a consistent estimate of $\| V - \calC_\lambda^\pi V\|^2_{2, \mu}$ as long as $\calG$ is rich enough.

Note that the only difference between $\el_\calD$ and $\er_\calD$ is that the former takes single-variable function ($V \in \Real^{\S}$) as first argument while the latter takes two-variable function ($g \in \Real^{\S \times \A}$) as the first argument. 

\section{Analysis}
In this section, we provide a near-optimal performance guarantee for the SBEED algorithm. In order to state our main results, we need to introduce a few key assumptions. The first characterizes the distribution shift, more precisely, the mismatch between the training distribution $\mu$ and the discounted occupancy measure induced by any policy $\pi \in \calP \cup \{ \pi^\star_\lambda \}$. 
\begin{assumption}[Concentrability coefficient] \label{assump: concentrability} we assume that
$C_2 \triangleq \max_{\pi \in \calP \cup \{ \pi^\star_\lambda \}}\Big \| \frac{d^\pi}{\mu}\Big \|^2_{2, \mu} < \infty$.
\end{assumption}
$C_2$ uses the $\mu$-weighted square of the marginalized importance weights $d^\pi / \mu$ and it is one of the simplest versions of concentrability coefficients considered in the literature~\citep{Munos03, antos2008learning, scherrer2014approximate}. In spite of its simple form, $C_2$ could be tighter than more involved concentrability coefficients in some cases~\citep{Xie2020}.

Now, we introduce the assumptions that characterize the representation power of the function classes. The next assumption measures the capacity of the policies and value spaces to represent the optimal policy and the optimal value function of the soft MDP. 
\begin{assumption}[Approximate realizability] \label{assump: realizability}
$\epsilon_{\calV, \calP} \triangleq \min_{V \in \calV, \pi \in \calP} \| V - \calC_\lambda^\pi V\|^2_{2, \mu} < \infty$
\end{assumption}
According to Lemma~\ref{lemma: consistency}, $\calP$ and $\calV$ realize $\pi^\star_\lambda$ and $V^\star_\lambda$ ($\pi^\star_\lambda \in \calP$ and $V^\star_\lambda \in \calV$) implies that $\epsilon_{\calV, \calP} = 0$. Therefore, $\epsilon_{\calV, \calP}$ measures the violation of the realizability of $\calP$ and $\calV$.

\begin{assumption}[Approximate realizability of the helper class] \label{assump: realizability of G}
$\epsilon_{\calG, \calV, \calP} \triangleq \max_{V \in \calV, \pi \in \calP} \min_{g \in \calG} \|g - C^\pi_\lambda V\|^2_{2, \mu} < \infty$.
\end{assumption}
When $\calG$ realizes the optimal Bayes regressor $C^\pi_\lambda V$ for any $\pi \in \calP$ and $V \in \calV$,  $\epsilon_{\calG, \calV, \calP} = 0$. Therefore, the latter assumption measures the violation of $C^\pi_\lambda V \in \calG$ for the worst-case $\pi \in \calP$ and $V \in \calV$.

Our analysis starts by stating a useful telescoping Lemma.
\begin{lemma}[Telescoping Lemma]\label{lemma: telescoping}
    For any $V \in \Real^{\S \times \A}$ and $\pi \in \Delta(\A)^{\S}$:
    \begin{align*}
        \E_{s \sim d_0}[V(s)] - J_\lambda(\pi)
        &= \frac{\E_{(s, a) \sim d^\pi}[ V(s) - (\calC_\lambda^\pi V)(s, a)]}{1-\gamma}.
    \end{align*}
\end{lemma}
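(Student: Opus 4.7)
The plan is to unfold the right-hand side using the definition of $d^\pi$ as a discounted mixture of the time-$t$ state-action distributions, and then exploit telescoping to recover the initial-state value term, with the remaining sum identified as $J_\lambda(\pi)$.

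First, I would substitute the definition $(\calC_\lambda^\pi V)(s, a) = R(s, a) + \gamma (\P V)(s, a) - \lambda \ln \pi(a \mid s)$ into the numerator of the right-hand side to obtain
\begin{align*}
\E_{(s, a) \sim d^\pi}\bigl[V(s) - (\calC_\lambda^\pi V)(s, a)\bigr]
= \E_{(s, a) \sim d^\pi}\bigl[V(s) - \gamma (\P V)(s, a) - R(s, a) + \lambda \ln \pi(a \mid s)\bigr].
\end{align*}
Then, using $d^\pi(s,a) = (1-\gamma)\sum_{t=0}^{\infty} \gamma^t d^\pi_t(s)\pi(a\mid s)$, I would rewrite this expectation as a discounted sum over trajectories $\{(s_t, a_t, r_t)\}_{t \geq 0}$ with $s_0 \sim d_0$ and $a_t \sim \pi(\cdot \mid s_t)$, $s_{t+1} \sim P(\cdot \mid s_t, a_t)$.

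Second, I would split the resulting sum into two pieces. The piece containing $V(s) - \gamma(\P V)(s,a)$ becomes $(1-\gamma)\sum_{t=0}^{\infty}\gamma^t \E[V(s_t) - \gamma V(s_{t+1})]$, which telescopes to $(1-\gamma)\E[V(s_0)] = (1-\gamma)\E_{s\sim d_0}[V(s)]$, since the tail term $\lim_{T\to\infty}\gamma^{T+1}\E[V(s_{T+1})]$ vanishes thanks to the uniform bound on $V$. The piece containing $-R(s,a) + \lambda \ln \pi(a\mid s)$ becomes
\begin{align*}
-(1-\gamma) \E\Bigl[\sum_{t=0}^{\infty} \gamma^t \bigl(r_t - \lambda \ln \pi(a_t \mid s_t)\bigr) \,\Big|\, s_0 \sim d_0, \pi\Bigr] = -(1-\gamma) J_\lambda(\pi),
\end{align*}
by definition of the entropy-regularized objective.

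Combining the two pieces gives $(1-\gamma)\bigl(\E_{s\sim d_0}[V(s)] - J_\lambda(\pi)\bigr)$, and dividing by $1-\gamma$ yields the claim. There is no real obstacle here: the only mild technical point is justifying the interchange of the infinite sum and expectation (and the vanishing tail), which follows from the boundedness of $V$ together with the geometric factor $\gamma^t$; everything else is a straightforward telescoping argument.
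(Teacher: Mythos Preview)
Your proposal is correct and takes essentially the same approach as the paper: both arguments expand $d^\pi$ as the discounted mixture $(1-\gamma)\sum_{t\ge 0}\gamma^t d^\pi_t$, telescope the $V(s_t)-\gamma V(s_{t+1})$ contributions down to $(1-\gamma)\E_{s\sim d_0}[V(s)]$, and identify the remaining $R-\lambda\ln\pi$ part with $(1-\gamma)J_\lambda(\pi)$. The only cosmetic difference is that the paper first isolates $\E_{(s,a,r,s')\sim d^\pi}[V(s)-\gamma V(s')]$ and then invokes $J_\lambda(\pi)=\frac{1}{1-\gamma}\E_{(s,a)\sim d^\pi}[R(s,a)-\lambda\ln\pi(a\mid s)]$, whereas you substitute $\calC_\lambda^\pi V$ first and split afterward.
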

Lemma~\ref{lemma: telescoping} is an important first step to prove a linear dependence on the planning horizon $1/(1-\gamma)$ of SBEED unlike standard iterative methods, such as Fitted Q-iteration, that incur quadratic dependence on the horizon $1/(1-\gamma)^2$. A similar lemma was proved in~\citet{Xie2020} for Q-value functions of the unregularized MDP.  

Let $\hat{V}$ and $\hat{\pi}$ denote the output of the SBEED algorithm:
\begin{align*}
     \hat{V}, \hat{\pi} \triangleq \argmin_{V \in  \calV, \pi \in \calP} \max_{g \in \calG}\el_{\calD}(V; V, \pi) - \er_{\calD}(g; V, \pi).
\end{align*}
With the telescoping lemma~\ref{lemma: telescoping} and the definition of concentrability coefficient $C_2$ , we can relate the suboptimality of the learnt policy $\hat{\pi}$ with the minimization objective Equation~\eqref{eq: ultimate loss}.
\begin{lemma}[Suboptimality] \label{lemma: suboptimality} 
The performance difference between the optimal policy and the output policy, is given by
\begin{equation} \label{eq: suboptimality}
    J(\pi^\star) - J(\hat{\pi}) \leq \frac{\lambda \ln |\A|}{1-\gamma} + \frac{2 \sqrt{C_2}}{1-\gamma} \| \hat{V}- \calC_\lambda^{\hat{\pi}} \hat{V}\|_{2, \mu}.
\end{equation}
\end{lemma}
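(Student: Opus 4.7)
The plan is to split the suboptimality into an entropy-regularization gap plus a gap measured in the soft-MDP objective $J_\lambda$, and then use Lemma~\ref{lemma: telescoping} twice to relate the latter to the consistency residual $\hat V - \calC_\lambda^{\hat\pi}\hat V$.

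First I would use the identity $J_\lambda(\pi) = J(\pi) + \lambda \E[\sum_t \gamma^t \calH(\pi(\cdot\mid s_t))]$ together with $0 \le \calH \le \ln|\A|$ to get the two-sided sandwich $J_\lambda(\pi) - \frac{\lambda \ln|\A|}{1-\gamma} \le J(\pi) \le J_\lambda(\pi)$. Combining this with the optimality of $\pi^\star_\lambda$ for $J_\lambda$ yields
\[
J(\pi^\star) - J(\hat\pi) \;\le\; J_\lambda(\pi^\star_\lambda) - J_\lambda(\hat\pi) + \frac{\lambda \ln|\A|}{1-\gamma},
\]
which isolates the first additive term in the target bound and reduces the problem to controlling $J_\lambda(\pi^\star_\lambda) - J_\lambda(\hat\pi)$.

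Next I would apply Lemma~\ref{lemma: telescoping} with $V = \hat V$ twice, once with $\pi = \hat\pi$ and once with $\pi = \pi^\star_\lambda$, and subtract to obtain
\[
J_\lambda(\pi^\star_\lambda) - J_\lambda(\hat\pi) = \frac{\E_{d^{\hat\pi}}[\hat V - \calC_\lambda^{\hat\pi}\hat V] - \E_{d^{\pi^\star_\lambda}}[\hat V - \calC_\lambda^{\pi^\star_\lambda}\hat V]}{1-\gamma}.
\]
The second expectation involves $\calC_\lambda^{\pi^\star_\lambda}$ rather than $\calC_\lambda^{\hat\pi}$, so the key trick is the pointwise identity
$(\calC_\lambda^{\hat\pi}\hat V)(s,a) - (\calC_\lambda^{\pi^\star_\lambda}\hat V)(s,a) = \lambda \ln \pi^\star_\lambda(a\mid s) - \lambda \ln \hat\pi(a\mid s)$, which follows from the definition of the consistency operator. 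Taking expectation under $d^{\pi^\star_\lambda}$ turns the correction term into $\lambda\,\E_{s\sim d^{\pi^\star_\lambda}}[\mathrm{KL}(\pi^\star_\lambda(\cdot\mid s)\,\|\,\hat\pi(\cdot\mid s))] \ge 0$, which lets us replace $\calC_\lambda^{\pi^\star_\lambda}$ by $\calC_\lambda^{\hat\pi}$ at the cost of an inequality in the favorable direction.

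Setting $f(s,a) = \hat V(s) - (\calC_\lambda^{\hat\pi}\hat V)(s,a)$, this reduces the bound to $J_\lambda(\pi^\star_\lambda) - J_\lambda(\hat\pi) \le (1-\gamma)^{-1}\bigl(\E_{d^{\hat\pi}}[f] - \E_{d^{\pi^\star_\lambda}}[f]\bigr)$. Finally, I would change measure from $d^\pi$ to $\mu$ and apply Cauchy--Schwarz: for any $\pi \in \calP \cup \{\pi^\star_\lambda\}$,
\[
|\E_{d^\pi}[f]| = \Bigl|\E_{\mu}\bigl[\tfrac{d^\pi}{\mu}\, f\bigr]\Bigr| \le \Bigl\|\tfrac{d^\pi}{\mu}\Bigr\|_{2,\mu}\,\|f\|_{2,\mu} \le \sqrt{C_2}\,\|f\|_{2,\mu}
\]
by Assumption~\ref{assump: concentrability}. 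Using this for both $\hat\pi$ and $\pi^\star_\lambda$ gives the factor of $2\sqrt{C_2}/(1-\gamma)$ and closes the proof.

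The main obstacle is the mismatch between the consistency operator appearing in Lemma~\ref{lemma: telescoping} for $\pi^\star_\lambda$ (which is $\calC_\lambda^{\pi^\star_\lambda}$) and the one appearing in the target bound (which is $\calC_\lambda^{\hat\pi}$); the KL-based replacement step is what makes the whole argument go through with the right operator on the right-hand side, and it is essential that this replacement goes in the direction of an upper bound on $J_\lambda(\pi^\star_\lambda) - J_\lambda(\hat\pi)$.
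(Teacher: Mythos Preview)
Your proposal is correct and follows essentially the same argument as the paper's proof: apply the telescoping Lemma~\ref{lemma: telescoping} with $\hat V$ for both $\hat\pi$ and $\pi^\star_\lambda$, use the KL identity to replace $\calC_\lambda^{\pi^\star_\lambda}$ by $\calC_\lambda^{\hat\pi}$ (dropping the nonnegative KL term), and finish with Cauchy--Schwarz plus Assumption~\ref{assump: concentrability}. The only cosmetic difference is that the paper invokes $J(\pi^\star)=J_\lambda(\pi^\star)$ (because $\pi^\star$ is deterministic) where you use the entropy sandwich $J(\pi)\le J_\lambda(\pi)$; both yield the same reduction to $J_\lambda(\pi^\star_\lambda)-J_\lambda(\hat\pi)$.
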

The first term of the left-hand-side of Equation~\eqref{eq: suboptimality} is the bias due to the entropy regularization. To be able to establish our performance guarantee we need to relate the second term $ \| \hat{V}- \calC_\lambda^{\hat{\pi}} \hat{V}\|_{2, \mu}$ to the empirical loss $\el_{\calD}(\hat{V}; \hat{V}, \hat{\pi}) - \er_{\calD}(\hat{g}_{\hat{V}, \hat{\pi}}; \hat{V}, \hat{\pi})$ in the minimax objective in Equation~\eqref{eq: minimax} that SBEED solves where we define $\hat{g}_{V, \pi} \triangleq \argmin_{g \in \calG} \er_{\calD}(g; V, \pi)$ for any $V, \pi$.
The former is a population loss while the latter is an empirical loss estimated from the dataset $\calD$ and involves a helper function $\hat{g}_{\hat{V}, \hat{\pi}}$. 
We first drop the dependence on the helper function $\hat{g}_{\hat{V}, \hat{\pi}}$ by bounding the deviation between $\er_{\calD}(\hat{g}_{V, \pi}; V, \pi)$ and $\er_{\calD}(\calC_\lambda^\pi V; V, \pi)$ uniformly over $\calV$, $\calP$ and $\calG$. We get informally:
\begin{align*}
    \el_{\calD}(\hat{V}; \hat{V}, \hat{\pi}) - &  \er_{\calD}(\hat{g}_{\hat{V}, \hat{\pi}}; \hat{V}, \hat{\pi}) \approx \\
    & \el_{\calD}(\hat{V}; \hat{V}, \hat{\pi}) -  \er_{\calD}(\calC_\lambda^{\hat{\pi}} \hat{V}; \hat{V}, \hat{\pi})
\end{align*} 
We can finally obtain the desired result by exploiting the fact that $\el_{\calD}(\hat{V}; \hat{V}, \hat{\pi}) -  \er_{\calD}(\calC_\lambda^{\hat{\pi}} \hat{V}; \hat{V}, \hat{\pi})$ is equal in expectation over draws of the dataset $\calD$ to the quantity of interest $\| \hat{V}- \calC_\lambda^{\hat{\pi}} \hat{V}\|^2_{2, \mu}$. Thorough treatment of each step involves using concentration of measures as well as dealing with function approximation errors. In particular, we use Bernstein inequality in order to get faster rate, similarly to what was used for Fitted Q-iteration and MSBO analysis in ~\citet{chen2019information}. Detailed proofs are provided in the supplemental.  
We now state our performance guarantee.
\begin{theorem}[Performance guarantee of SBEED] \label{theorem: performance}
With probability at least $1-\delta$
\begin{align*}
    J(\pi^\star) & - J(\hat{\pi})  \leq \frac{\lambda \ln |\A|}{1-\gamma} + \frac{\sqrt{C_2}}{1-\gamma} {\cal O} 
    \left( \sqrt{\epsilon_{\calG, \calV, \calP}} + \sqrt{\epsilon_{\calV, \calP}} \right) \\
    & + \frac{\sqrt{C_2}}{1-\gamma} {\cal O} \left (
    \sqrt[4]{\frac{V_{\lambda, \max}^2  \ln \frac{|\calV| |\calP| |\calG|}{\delta}}{n} (\epsilon_{\calG, \calV, \calP} + \epsilon_{\calV, \calP})} \right) \\
    &  + \frac{\sqrt{C_2}}{1-\gamma} {\cal O} \left( \sqrt{\frac{V_{\lambda, \max}^2  \ln \frac{|\calV| |\calP| |\calG|}{\delta}}{n}} \right)
\end{align*}
\end{theorem}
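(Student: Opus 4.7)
The starting point is the Suboptimality Lemma~\ref{lemma: suboptimality}, which reduces the theorem to bounding $\|\hat{V} - \calC_\lambda^{\hat{\pi}} \hat{V}\|_{2, \mu}$; the rest follows a template adapted from the MSBO analysis of \citet{Xie2020} and the fast-rate arguments of \citet{chen2019information}. A direct conditional-expectation calculation over $s'$ shows that for any $V \in \calV$, $\pi \in \calP$, and $g \in \calG$,
\begin{equation*}
\E\bigl[\el_\calD(V;V,\pi) - \er_\calD(g;V,\pi)\bigr] = \|V - \calC_\lambda^\pi V\|^2_{2,\mu} - \|g - \calC_\lambda^\pi V\|^2_{2,\mu},
\end{equation*}
so the troublesome variance term from \eqref{eq: conditional_variance} cancels, which is the whole point of the Fenchel trick. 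Maximizing over $g \in \calG$ and invoking Assumption~\ref{assump: realizability of G} gives $\max_{g \in \calG}\E[\el_\calD - \er_\calD] \ge \|V - \calC_\lambda^\pi V\|^2_{2,\mu} - \epsilon_{\calG,\calV,\calP}$, while Assumption~\ref{assump: realizability} supplies a comparator $(\tilde V, \tilde\pi) \in \calV \times \calP$ with $\|\tilde V - \calC_\lambda^{\tilde\pi}\tilde V\|^2_{2,\mu} \le \epsilon_{\calV,\calP}$.

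Next I would set up a uniform Bernstein concentration bound for $\Phi(V,\pi,g) \triangleq \el_\calD(V;V,\pi) - \er_\calD(g;V,\pi)$ over $\calV \times \calP \times \calG$. Writing each summand as $(V(s_i) - g(s_i,a_i))\cdot(V(s_i)+g(s_i,a_i) - 2 Y_i)$ with $Y_i = r_i + \gamma V(s_i') - \lambda \ln \pi(a_i\mid s_i)$, the second factor is bounded by $O(V_{\lambda,\max})$, so the conditional variance of each term is controlled by $O(V_{\lambda,\max}^2)\cdot\|V-g\|^2_{2,\mu}$. Bernstein's inequality combined with a union bound over the finite classes (entering as $\iota = \ln(|\calV||\calP||\calG|/\delta)$) then gives
\begin{equation*}
\bigl|\Phi(V,\pi,g) - \E\Phi(V,\pi,g)\bigr| \le O\!\left(\sqrt{\tfrac{V_{\lambda,\max}^2\, \|V-g\|^2_{2,\mu}\,\iota}{n}} + \tfrac{V_{\lambda,\max}^2\,\iota}{n}\right)
\end{equation*}
uniformly in $(V,\pi,g)$, which is the ingredient that will eventually produce a variance-dependent rate rather than a worst-case $1/\sqrt[4]{n}$.

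Now I would chain the pieces together. By empirical optimality of $(\hat V,\hat\pi)$, $\max_g \Phi(\hat V,\hat\pi,g) \le \max_g \Phi(\tilde V,\tilde\pi,g)$. Applying the concentration bound on both sides to move empirical $\to$ population on the left and population $\to$ empirical on the right, and using the population identity above with the realizers from Assumptions~\ref{assump: realizability} and \ref{assump: realizability of G}, yields
\begin{equation*}
\|\hat V - \calC_\lambda^{\hat\pi}\hat V\|^2_{2,\mu} \le \epsilon_{\calV,\calP} + \epsilon_{\calG,\calV,\calP} + (\text{Bernstein deviations evaluated at both triples}).
\end{equation*}
For the Bernstein term evaluated at $(\hat V,\hat\pi,\hat g_{\hat V,\hat\pi})$, the variance proxy $\|\hat V - \hat g\|^2_{2,\mu}$ is dominated by $\|\hat V - \calC_\lambda^{\hat\pi}\hat V\|^2_{2,\mu} + \epsilon_{\calG,\calV,\calP}$ via a triangle inequality, so the inequality becomes self-referential and solving the resulting quadratic in $\|\hat V - \calC_\lambda^{\hat\pi}\hat V\|_{2,\mu}$ by AM-GM produces the claimed $\sqrt[4]{V_{\lambda,\max}^2\iota(\epsilon_{\calV,\calP}+\epsilon_{\calG,\calV,\calP})/n}$ cross-term as well as the clean $\sqrt{V_{\lambda,\max}^2\iota/n}$ statistical term. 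Plugging the resulting bound for $\|\hat V - \calC_\lambda^{\hat\pi}\hat V\|_{2,\mu}$ into Lemma~\ref{lemma: suboptimality} gives the three $\sqrt{C_2}/(1-\gamma)$-scaled terms, and the leading bias $\lambda\ln|\calA|/(1-\gamma)$ is inherited directly from that lemma.

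The main obstacle is the localization in the final step: a naive Bernstein application with crude $\|V-g\|^2_{2,\mu} \le O(V_{\lambda,\max}^2)$ would only give a slow $1/\sqrt n$ on a term of size $V_{\lambda,\max}^2$, wiping out the improvement over \citet{dai2018sbeed}. Obtaining the advertised rate requires showing that the variance of the Bernstein summands is itself controlled by the population loss one is trying to bound, and then carefully solving the resulting self-bounding inequality; the approximation errors $\epsilon_{\calV,\calP}$ and $\epsilon_{\calG,\calV,\calP}$ have to be tracked through this step without inflating them. A secondary but nontrivial point is that the deviation argument must be applied at $\hat g_{\hat V,\hat\pi}$, which is random and data-dependent; this is handled by the uniform bound over $\calG$.
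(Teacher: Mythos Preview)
Your overall plan matches the paper's: reduce via Lemma~\ref{lemma: suboptimality}, then control $\|\hat V-\calC_\lambda^{\hat\pi}\hat V\|_{2,\mu}$ by combining empirical optimality of $(\hat V,\hat\pi)$ with a uniform Bernstein bound whose variance is self-bounding, and solve the resulting quadratic. The paper's decomposition is slightly more modular---it introduces three auxiliary random variables $X,Y,Z$ and routes everything through the population best $\bar g_{V,\pi}\triangleq\argmin_{g\in\calG}\|g-\calC_\lambda^\pi V\|_{2,\mu}^2$---but the mechanism is the same one you describe.

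There is, however, one real gap. You assert that the variance proxy $\|\hat V-\hat g_{\hat V,\hat\pi}\|^2_{2,\mu}$ is dominated by $\|\hat V-\calC_\lambda^{\hat\pi}\hat V\|^2_{2,\mu}+\epsilon_{\calG,\calV,\calP}$ ``via a triangle inequality.'' The triangle inequality only gives $\|\hat V-\hat g\|_{2,\mu}\le\|\hat V-\calC_\lambda^{\hat\pi}\hat V\|_{2,\mu}+\|\hat g-\calC_\lambda^{\hat\pi}\hat V\|_{2,\mu}$, and the second term is \emph{not} controlled by Assumption~\ref{assump: realizability of G}: that assumption bounds $\|\bar g_{V,\pi}-\calC_\lambda^\pi V\|_{2,\mu}$ for the \emph{population} minimizer, whereas $\hat g_{V,\pi}$ is the empirical one. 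Uniform concentration over $\calG$ lets you evaluate the deviation at a data-dependent $g$, but it does not by itself tell you that this $g$ is close to $\calC_\lambda^\pi V$. The paper closes exactly this gap with an additional self-bounding Bernstein step on
\[
X_i=\bigl(\hat g_{V,\pi}(s_i,a_i)-r_i-\gamma V(s_i')+\lambda\ln\pi(a_i\mid s_i)\bigr)^2-\bigl(\bar g_{V,\pi}(s_i,a_i)-r_i-\gamma V(s_i')+\lambda\ln\pi(a_i\mid s_i)\bigr)^2,
\]
whose expectation equals $\|\hat g_{V,\pi}-\calC_\lambda^\pi V\|^2_{2,\mu}-\|\bar g_{V,\pi}-\calC_\lambda^\pi V\|^2_{2,\mu}\ge 0$ and whose variance is itself bounded by $O(V_{\lambda,\max}^2)$ times that expectation plus $\epsilon_{\calG,\calV,\calP}$. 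Combining this with $\frac{1}{n}\sum_i X_i\le 0$ (optimality of $\hat g_{V,\pi}$) and solving the quadratic yields $\|\hat g_{V,\pi}-\calC_\lambda^\pi V\|^2_{2,\mu}\le O\bigl(\epsilon_{\calG,\calV,\calP}+\sqrt{\imath\,\epsilon_{\calG,\calV,\calP}/n}+\imath/n\bigr)$ uniformly in $(V,\pi)$; only after this does your triangle-inequality step become legitimate. In short, the missing ingredient is one more layer of localization for the inner regression problem---precisely the paper's $X$-term analysis---and once you add it your argument goes through and is equivalent to the paper's.
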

As an immediate consequence of Theorem~\ref{theorem: performance}, we provide a finite sample complexity of SBEED in the case of full realizability i.e $\epsilon_{\calG, \calV, \calP} = \epsilon_{\calV, \calP} = 0$.

\begin{theorem}[Sample complexity of SBEED] $\forall \epsilon, \delta >0$, if $\lambda \leq \frac{(1-\gamma)\epsilon}{2 \ln |\A|}$ and $\epsilon_{\calG, \calV, \calP} = \epsilon_{\calV, \calP} = 0$, then with probability at least $1- \delta$, we have $J(\pi^\star) - J(\hat{\pi}) \leq \epsilon$, if 
\begin{equation*}
    n = \mathcal{O}\left(\frac{C_2 \cdot V_{\lambda, \max}^2 \ln \frac{|\calV| |\calP| |\calG|}{\delta}}{ \epsilon^2 (1-\gamma)^2}\right)
\end{equation*}

\end{theorem}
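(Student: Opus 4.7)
The plan is to derive this sample complexity statement as a direct corollary of Theorem~\ref{theorem: performance}. Under the realizability assumptions $\epsilon_{\calG, \calV, \calP} = \epsilon_{\calV, \calP} = 0$, the middle two ${\cal O}$-terms in the performance guarantee vanish identically (both the $\sqrt{\epsilon_{\calG, \calV, \calP}} + \sqrt{\epsilon_{\calV, \calP}}$ approximation term and the fourth-root cross term), so the bound collapses to
\begin{equation*}
J(\pi^\star) - J(\hat{\pi}) \leq \frac{\lambda \ln |\A|}{1-\gamma} + \frac{\sqrt{C_2}}{1-\gamma} {\cal O}\!\left(\sqrt{\frac{V_{\lambda,\max}^2 \ln \frac{|\calV||\calP||\calG|}{\delta}}{n}}\right).
\end{equation*}

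The next step is to split the target accuracy $\epsilon$ into a bias budget and a statistical budget. For the bias term, the hypothesis $\lambda \leq \frac{(1-\gamma)\epsilon}{2 \ln |\A|}$ is exactly what is needed to guarantee $\frac{\lambda \ln |\A|}{1-\gamma} \leq \epsilon/2$. For the statistical term, I would require
\begin{equation*}
\frac{\sqrt{C_2}}{1-\gamma} \sqrt{\frac{V_{\lambda,\max}^2 \ln \frac{|\calV||\calP||\calG|}{\delta}}{n}} \leq \frac{\epsilon}{2},
\end{equation*}
which, after squaring and rearranging, is satisfied as soon as $n$ exceeds a constant multiple of $\frac{C_2 \cdot V_{\lambda,\max}^2 \ln \frac{|\calV||\calP||\calG|}{\delta}}{(1-\gamma)^2 \epsilon^2}$. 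Combining the two halves yields $J(\pi^\star)-J(\hat{\pi})\leq \epsilon$ with probability at least $1-\delta$.

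There is no real obstacle here beyond bookkeeping, since all the heavy lifting (the telescoping argument, the concentration control via Bernstein's inequality, the uniform deviation over $\calV$, $\calP$, $\calG$, and the handling of the helper regressor) is already absorbed in Theorem~\ref{theorem: performance}. The only subtlety worth flagging is that $V_{\lambda,\max} = \frac{R_{\max} + \lambda \ln|\A|}{1-\gamma}$ itself depends on $\lambda$, but since the hypothesis on $\lambda$ keeps $\lambda \ln|\A|$ bounded by $(1-\gamma)\epsilon/2 \leq R_{\max}$ in any reasonable regime, $V_{\lambda,\max}$ remains of the same order as $V_{\max} = R_{\max}/(1-\gamma)$ and the ${\cal O}(\cdot)$ notation absorbs the constant. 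The logarithmic factor $\ln(|\calV||\calP||\calG|/\delta)$ is passed through untouched from the performance theorem.
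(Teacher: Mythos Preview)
Your proposal is correct and matches the paper's approach exactly: the paper states this theorem as ``an immediate consequence'' of Theorem~\ref{theorem: performance} and provides no separate proof, so the derivation you give---setting $\epsilon_{\calG, \calV, \calP} = \epsilon_{\calV, \calP} = 0$ to kill the approximation terms, using the $\lambda$ hypothesis to bound the bias by $\epsilon/2$, and solving for $n$ to bound the statistical term by $\epsilon/2$---is precisely the intended argument.
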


\paragraph{Comparison with prior analysis of SBEED:} Our results improve over the original analysis of SBEED in~\citet{dai2018sbeed} in many aspects. First, in terms of guarantee,~\citet{dai2018sbeed} provide a bound on the $\mu$-weighted distance between the learnt value function and the optimal one $\|\hat{V} - V^\star\|_{2, \mu}$, which does not really capture the performance of the algorithm. In fact, the learnt value function does not necessarily correspond to the value of any policy. It is rather used to learn a policy $\hat{\pi}$ which will be executed in the MDP. Therefore, the quantity of interest that we should be looking at instead, as we did in this work, is the difference between the optimal performance and the performance of $\hat{\pi}$: $J^\star - J(\hat{\pi})$. Secondly, in terms of statistical error, we obtain a faster rate of $1/\sqrt{n}$ in the fully realizable case thanks to the use of Bernstein inequality~\citep{cesa2006prediction} while ~\citet{dai2018sbeed} prove a slower rate of  $1/\sqrt[4]{n}$. When the realizability holds only approximately, the slow rate of our bound in Theorem~\ref{theorem: performance} $\sqrt[4]{\frac{\epsilon_{\calG, \calV, \calP} + \epsilon_{\calV, \calP}}{n}}$ can be made smaller than the $1/\sqrt[4]{n}$ of~\citet{dai2018sbeed} by decreasing the approximation error. 

\paragraph{Comparison with MSBO:}~\citet{Xie2020} studies a related algorithm called MSBO, which solves the following minimax objective:
\begin{align*}
    \max_{Q \in \mathcal{Q}}&\min_{f \in \mathcal{F}} l_{\calD}(Q; Q) - l_{\calD}(f; Q), \text{ where}\\
    l_\calD (Q;Q') \triangleq &\frac{1}{n} \sum_{i=1}^n \Big( Q(s_i, a_i) - r - \gamma \max_{a'} Q'(s'_i, a')\Big)^2.
\end{align*}
SBEED can be seen as a \textit{smooth} couterpart to MSBO. Except for the bias due to entropy regularization, our performance bound shares a similar structure to the bound obtained by~\citet{Xie2020} for MSBO, but with our own definition of the concentrability coefficient and approximation errors that suit our algorithm of interest. However, the two algorithms differ in several manners. SBEED learns jointly both the policy and value function while MSBO learns the $Q$-value function and considers only policies that are greedy with respect to it. If we set $\lambda=0$ in the SBEED objective, we don't recover the MSBO objective, which means that MSBO is not a special case of SBEED. In fact, when $\lambda=0$, SBEED will learn the value function $V^\pi$ of the behavior policy that generates the data $\pi (a \mid s) = \frac{\mu(s, a)}{\sum_{a'} \mu(s, a')}$. Finally, it is established that SBEED, when implemented with a differentiable function approximator, would converge locally while there is no practical convergent instantiation of MSBO, as far as we know.

\section{Conclusion and Future Work}
We establish a performance guarantee of the SBEED algorithm that depends only linearly on the planning horizon and enjoys improved statistical rate in the realizable case. Our bound matches the bound of MSBO, a \textit{non-smooth} counterpart of SBEED, which suggests that there is no clear benefit of the entropy regularization. 
As future work, we would like to look at regularized versions of Fitted Policy Iteration or Fitted Q Iteration that have weaker guarantee than SBEED and investigate whether the regularization would play a more significant role to improve their performance.  

\section{Acknowledgements}
We would like to thank Harsh Satija for his helpful feedback on the paper.

\bibliographystyle{apalike}
\bibliography{lib}

\begin{thebibliography}{}

\bibitem[Antos et~al., 2008]{antos2008learning}
Antos, A., Szepesv{\'a}ri, C., and Munos, R. (2008).
\newblock Learning near-optimal policies with bellman-residual minimization
  based fitted policy iteration and a single sample path.
\newblock {\em Machine Learning}, 71(1):89--129.

\bibitem[Baird, 1995]{baird1995residual}
Baird, L. (1995).
\newblock Residual algorithms: Reinforcement learning with function
  approximation.
\newblock In {\em Machine Learning Proceedings 1995}, pages 30--37. Elsevier.

\bibitem[Bertsekas et~al., 1995]{bertsekas1995dynamic}
Bertsekas, D.~P., Bertsekas, D.~P., Bertsekas, D.~P., and Bertsekas, D.~P.
  (1995).
\newblock {\em Dynamic programming and optimal control}, volume~1.
\newblock Athena scientific Belmont, MA.

\bibitem[Cesa-Bianchi and Lugosi, 2006]{cesa2006prediction}
Cesa-Bianchi, N. and Lugosi, G. (2006).
\newblock {\em Prediction, learning, and games}.
\newblock Cambridge university press.

\bibitem[Chen and Jiang, 2019]{chen2019information}
Chen, J. and Jiang, N. (2019).
\newblock Information-theoretic considerations in batch reinforcement learning.
\newblock In {\em International Conference on Machine Learning}, pages
  1042--1051.

\bibitem[Dai et~al., 2018]{dai2018sbeed}
Dai, B., Shaw, A., Li, L., Xiao, L., He, N., Liu, Z., Chen, J., and Song, L.
  (2018).
\newblock Sbeed: Convergent reinforcement learning with nonlinear function
  approximation.
\newblock In {\em International Conference on Machine Learning}, pages
  1125--1134.

\bibitem[Gordon, 1995]{gordon1995stable}
Gordon, G.~J. (1995).
\newblock Stable function approximation in dynamic programming.
\newblock In {\em Machine Learning Proceedings 1995}, pages 261--268. Elsevier.

\bibitem[Haarnoja et~al., 2018]{haarnoja2018soft}
Haarnoja, T., Zhou, A., Abbeel, P., and Levine, S. (2018).
\newblock Soft actor-critic: Off-policy maximum entropy deep reinforcement
  learning with a stochastic actor.
\newblock In {\em International Conference on Machine Learning}, pages
  1861--1870.

\bibitem[Mnih et~al., 2015]{mnih2015human}
Mnih, V., Kavukcuoglu, K., Silver, D., Rusu, A.~A., Veness, J., Bellemare,
  M.~G., Graves, A., Riedmiller, M., Fidjeland, A.~K., Ostrovski, G., et~al.
  (2015).
\newblock Human-level control through deep reinforcement learning.
\newblock {\em Nature}, 518(7540):529--533.

\bibitem[Munos, 2003]{Munos03}
Munos, R. (2003).
\newblock Error bounds for approximate policy iteration.
\newblock In Fawcett, T. and Mishra, N., editors, {\em Machine Learning,
  Proceedings of the Twentieth International Conference {(ICML} 2003), August
  21-24, 2003, Washington, DC, {USA}}, pages 560--567. {AAAI} Press.

\bibitem[Ormoneit and Sen, 2002]{ormoneit2002kernel}
Ormoneit, D. and Sen, {\'S}. (2002).
\newblock Kernel-based reinforcement learning.
\newblock {\em Machine learning}, 49(2-3):161--178.

\bibitem[Scherrer, 2014]{scherrer2014approximate}
Scherrer, B. (2014).
\newblock Approximate policy iteration schemes: a comparison.
\newblock In {\em International Conference on Machine Learning}, pages
  1314--1322.

\bibitem[Scherrer and Lesner, 2012]{scherrer2012use}
Scherrer, B. and Lesner, B. (2012).
\newblock On the use of non-stationary policies for stationary infinite-horizon
  markov decision processes.
\newblock In {\em Advances in Neural Information Processing Systems}, pages
  1826--1834.

\bibitem[Sutton et~al., 2008]{sutton2008convergent}
Sutton, R.~S., Szepesv{\'a}ri, C., and Maei, H.~R. (2008).
\newblock A convergent o (n) algorithm for off-policy temporal-difference
  learning with linear function approximation.
\newblock {\em Advances in neural information processing systems},
  21(21):1609--1616.

\bibitem[Touati et~al., 2018]{touati2018convergent}
Touati, A., Bacon, P.-L., Precup, D., and Vincent, P. (2018).
\newblock Convergent tree backup and retrace with function approximation.
\newblock In {\em International Conference on Machine Learning}, pages
  4955--4964.

\bibitem[Tsitsiklis and Van~Roy, 1997]{tsitsiklis1997analysis}
Tsitsiklis, J.~N. and Van~Roy, B. (1997).
\newblock Analysis of temporal-diffference learning with function
  approximation.
\newblock In {\em Advances in neural information processing systems}, pages
  1075--1081.

\bibitem[Xie and Jiang, 2020]{Xie2020}
Xie, T. and Jiang, N. (2020).
\newblock Q\({}^{\mbox{*}}\) approximation schemes for batch reinforcement
  learning: {A} theoretical comparison.
\newblock {\em CoRR}, abs/2003.03924.

\end{thebibliography}

\newpage
\appendix
\onecolumn

\section{Outline}
The appendix of this paper is organized as follows:
\begin{compactenum}[\hspace{0pt} 1.]
    \setlength{\itemsep}{2pt}
    \item Appendix \ref{sec:notation} provides a table of notation for easy reference.
    \item Appendix \ref{sec: proof of telecoping lemma} provides the proof of Lemma~\ref{lemma: telescoping} .
    \item Appendix \ref{sec: suboptimality} provides the proof of Lemma~\ref{lemma: suboptimality}.
    \item Appendix~\ref{sec: theorem} provides the proof of Theorem~\ref{theorem: performance}.
    \item Appendix \ref{sec: technical lemmas} provides some technical lemmas.
\end{compactenum}

\section{Notations} \label{sec:notation}
We provide this table for easy reference. Notation will also be defined as it is introduced.

\begin{table*}[h] 
\begin{center}
\caption{Notation table}
\begin{tabular}{l l l}
 \hline
  $J(\pi)$ & $\triangleq$ & $\E[ \sum_{t=0}^{\infty} \gamma^t r_t \mid s_0 \sim d_0, \pi]$ policy performance in the unregularized MDP \\
   $\pi^\star, V^\star, J^\star$ & $\triangleq$ & optimal policy, value function and performance of the unregularized MDP \\
  $J_\lambda(\pi)$ & $\triangleq$ & $\E \left [ \sum_{t=0}^{\infty} \gamma^t \left (r_t - \lambda \ln \pi(a_t \mid s_t)\right) \mid s_0 \sim d_0, \pi \right ]$ policy performance in the soft MDP \\
  $\pi_\lambda^\star, V_\lambda^\star, J_\lambda^\star$ & $\triangleq$ & optimal policy, value function and performance of the soft MDP \\
  $V_{\lambda, \max}$ & $\triangleq$ & $(R_{\max} + \lambda \ln|\A|) / (1-\gamma)$ maximum value taken by the value function of the soft MDP \\
  $\calC_\lambda^\pi$ & $\triangleq$ & Consistency operator defined by $(\calC_\lambda^\pi V)(s, a) = R(s, a) + \gamma (\P V)(s, a) - \lambda \ln \pi(a \mid s)$.\\
  $\calV$ & $\triangleq$ & class of candidate value function  $\subset [0, V_{\lambda, \max}]^{\S \times \A}$ \\
  $\calP$ & $\triangleq$ & class of candidate policies $\subset \{ \pi \in \Delta(\A)^{\S}, \| \ln \pi \|_{\infty} \leq V_{\lambda, \max} / \lambda \}$ \\
  $\calG$ & $\triangleq$ & class of helper functions $\subset [0, 2 V_{\lambda, \max}]^{\S \times \A}$\\
  $\| \cdot \|_{2, \mu}$ & $\triangleq$ & the $\mu$-weighted 2-norm $\forall f \in \Real^{\S \times \A},  \| f \|^2_{2, \mu} \triangleq \E_{(s, a) \sim \mu} [f(s, a)^2]$  \\
  $\el_{\calD}(V; V, \pi)$ &  $\triangleq$ & $ \frac{1}{n} \sum_{i=1}^n \Big( V(s_i) - r_i - \gamma V(s'_i) + \lambda \ln \pi(a_i \mid s_i) \Big)^2 $ \\
  $\er_{\calD}(g; V, \pi)$ & $\triangleq$ & $ \frac{1}{n} \sum_{i=1}^n \Big( g(s_i, a_i) - r_i - \gamma V(s'_i) + \lambda \ln \pi(a_i \mid s_i)\Big)^2$ \\
  $\hat{V}, \hat{\pi}$ & $\triangleq$ &  $\argmin_{V \in  \calV, \pi \in \calP} \max_{g \in \calG}\el_{\calD}(V; V, \pi) - \er_{\calD}(g; V, \pi)$ the output of SBEED algorithm\\
  $\hat{g}_{V, \pi}$ & $\triangleq$ & $\argmin_{g \in \calG} \er_{\calD}(g; V, \pi)$ \\
  $\bar{V}, \bar{\pi}$ &  $\triangleq$ & $\argmin_{V \in \calV, \pi \calP} \| V- \calC_\lambda^{\pi} V\|_{2, \mu}^2$ the best solution in $\calV$ and $\calP$ \\
  $\bar{g}_{V, \pi}$ & $\triangleq$ &  $\argmin_{g \in \calG} \|g - \calC_\lambda^\pi V \|^2_{2, \mu}$ the best solution in $\calG$ \\
  $C_2$ & $\triangleq$ & $\max_{\pi \in \calP \cup \{ \pi^\star_\lambda \}}\Big \| \frac{d^\pi}{\mu}\Big \|^2_{2, \mu}$ concentrability coefficient\\
  $\epsilon_{\calV, \calP}$ & $\triangleq$ & $\min_{V \in \calV, \pi \in \calP} \| V - \calC_\lambda^\pi V\|^2_{2, \mu}$ \\
  $\epsilon_{\calG, \calV, \calP}$ & $\triangleq$ & $\max_{V \in \calV, \pi \in \calP} \min_{g \in \calG} \|g - C^\pi_\lambda V\|^2_{2, \mu}$ \\
  $\imath$ &  $\triangleq$ & $V_{\lambda, \max}^2  \ln \frac{|\calV| |\calP| |\calG|}{\delta'}$ \\
  $\jmath$ &  $\triangleq$ & $V_{\lambda, \max}^2  \ln \frac{|\calV| |\calP|}{\delta'}$\\
  \end{tabular}

\end{center}
\end{table*}

\section{Proof of Lemma~\ref{lemma: telescoping}} \label{sec: proof of telecoping lemma}
\begin{proof} We have
\begin{align*}
    \E_{(s, a, r, s') \sim d^\pi}& [V(s) - \gamma V(s')] \\
    & = 
    (1 - \gamma) \left ( \sum_{s \in \S}\sum_{t=0}^\infty \gamma^t d^\pi_t(s) V(s) - \sum_{s, a, s' \in \S \times \A \times \S}\sum_{t=0}^\infty \gamma^{t+1} d^\pi_t(s) \pi(a \mid s) P( s' \mid s, a) V(s')   \right) \\
    & = (1 - \gamma) \left ( \sum_{s \in \S}\sum_{t=0}^\infty \gamma^t d^\pi_t(s) V(s) - \sum_{s' \in \S}\sum_{t=0}^\infty \gamma^{t+1} d^\pi_{t+1}(s') V(s')   \right) \tag{by marginalizing over $(s, a) \in \S \times \A$}\\
    & = (1- \gamma) \sum_{s \in \S} d_0(s) V(s) = (1-\gamma) \E_{s \sim d_0}[V(s)]
\end{align*}
We obtain the desired result by noticing that $J_\lambda(\pi) = \frac{\E_{(s, a) \sim d^\pi}[ R(s, a) - \lambda \ln \pi(a \mid s)]}{1-\gamma}$
\end{proof}

\section{Proof of Lemma~\ref{lemma: suboptimality}} \label{sec: suboptimality}
\begin{proof} We start by bounding the performace suboptimality in the soft MDP. 
\begin{align*}
    J_\lambda(\pi^\star_\lambda) - J_\lambda(\hat{\pi}) & = J_\lambda(\pi^\star_\lambda) - \E_{s \sim d_0}[\hat{V}(s)] + \E_{s \sim d_0}[\hat{V}(s)] - J_\lambda(\hat{\pi}) \\
    & = \frac{1}{1-\gamma} \Big( - \E_{(s, a, r, s') \sim d^{\pi^\star_\lambda}} [\hat{V}(s) - r - \gamma \hat{V}(s') + \lambda \ln \pi^\star_\lambda(a \mid s)] \\
    & \quad + \E_{(s, a, r, s') \sim d^{\hat{\pi}}} [\hat{V}(s) - r - \gamma \hat{V}(s') + \lambda \ln \hat{\pi}(a \mid s)]\Big) \tag{apply Lemma~\ref{lemma: telescoping}}\\
    & = \frac{1}{1-\gamma} \Big( - \E_{(s, a, r, s') \sim d^{\pi^\star_\lambda}} [\hat{V}(s) - r - \gamma \hat{V}(s') + \lambda \ln \hat{\pi}(a \mid s)] \\
    & \quad - \lambda \E_{(s, a) \sim d^{\pi^\star_\lambda}}\left[\ln \frac{\pi^\star_\lambda(a \mid s)}{\hat{\pi}(a \mid s)} \right] + \E_{(s, a, r, s') \sim d^{\hat{\pi}}} [\hat{V}(s) - r - \gamma \hat{V}(s') + \lambda \ln \hat{\pi}(a \mid s)]\Big) \\
    & = \frac{1}{1-\gamma} \Big( \E_{(s, a) \sim d^{\pi^\star_\lambda}}\left [ (\calC_\lambda^{\hat{\pi}} \hat{V})(s, a) - \hat{V}(s) \right] - \lambda \E_{s \sim d^{\pi^\star_\lambda}}\left[ D_{\text{KL}}\left(\pi^{\star}_\lambda(\cdot \mid s) \| \hat{\pi}(\cdot \mid s)\right)\right] \\
    & \quad + \E_{(s, a) \sim d^{\hat{\pi}}}\left [ \hat{V}(s) - (\calC_\lambda^{\hat{\pi}} \hat{V})(s, a) \right] \Big) \\
    & \leq \frac{1}{1-\gamma} \Big( \E_{(s, a) \sim d^{\pi^\star_\lambda}}\left [ (\calC_\lambda^{\hat{\pi}} \hat{V})(s, a) - \hat{V}(s) \right] + \E_{(s, a) \sim d^{\hat{\pi}}}\left [ \hat{V}(s) - (\calC_\lambda^{\hat{\pi}} \hat{V})(s, a) \right] \Big) \tag{$D_{\text{KL}}(p \| q) \geq 0$}\\
    & = \frac{1}{1-\gamma} \Big( \E_{(s, a) \sim \mu}\left [ \frac{d^{\pi^\star_\lambda}(s, a)}{\mu(s, a)}\left( (\calC_\lambda^{\hat{\pi}} \hat{V})(s, a) - \hat{V}(s)\right) \right] + \E_{(s, a) \sim \mu}\left [ \frac{d^{\hat{\pi}}(s, a)}{\mu(s, a)} \left(\hat{V}(s) - (\calC_\lambda^{\hat{\pi}} \hat{V})(s, a)\right) \right] \Big) \\
    & \leq \frac{1}{1-\gamma} \Big ( \Big \| \frac{d^{\pi^\star_\lambda}}{\mu} \Big \|_{2, \mu} \| \hat{V}- \calC_\lambda^{\hat{\pi}} \hat{V}\|_{2, \mu} + \Big \| \frac{d^{\hat{\pi}}}{\mu} \Big \|_{2, \mu} \| \hat{V}- \calC_\lambda^{\hat{\pi}} \hat{V}\|_{2, \mu} \Big ) \tag{Cauchy-Schwarz inequality}\\
    & \leq \frac{2 \sqrt{C_2}}{1-\gamma} \| \hat{V}- \calC_\lambda^{\hat{\pi}} \hat{V}\|_{2, \mu}
\end{align*}
Therefore,
\begin{align*}
    J(\pi^\star) - J(\hat{\pi}) & = 
    J(\pi^\star) - J_\lambda(\pi^\star_\lambda)  + J_\lambda(\pi^\star_\lambda) - J_\lambda(\hat{\pi}) 
    + J_\lambda(\hat{\pi}) - J(\hat{\pi}) \\
    & = J_\lambda(\pi^\star) - J_\lambda(\pi^\star_\lambda)  + J_\lambda(\pi^\star_\lambda) - J_\lambda(\hat{\pi}) 
    + J_\lambda(\hat{\pi}) - J(\hat{\pi}) \tag{$\pi^\star$ is deterministic policy, $J(\pi^\star) = J_\lambda(\pi^\star)$}\\
    & \leq J_\lambda(\pi^\star_\lambda) - J_\lambda(\hat{\pi}) + \lambda \E \left [\sum_{t=0}^\infty \gamma^t \calH ( \hat{\pi}(\cdot \mid s_t)) \mid s_0 \sim d_0, \pi^\star \right] \tag{$J_\lambda(\pi^\star) - J_\lambda(\pi^\star_\lambda) \leq 0$ by optimality of $\pi^\star_\lambda$} \\
    & \leq \frac{2 \sqrt{C_2}}{1-\gamma} \| \hat{V}- \calC_\lambda^{\hat{\pi}} V\|_{2, \mu} + \frac{\lambda \ln |\A|}{1-\gamma}
\end{align*}

\end{proof}
\section{Proof of Theorem~\ref{theorem: performance}}
\label{sec: theorem}
We provide here a complete analysis of the SBEED algorithm. Recall: 
\begin{align*}
    \hat{V}, \hat{\pi} & = \argmin_{V \in  \calV, \pi \in \calP} \max_{g \in \calG}\el_{\calD}(V; V, \pi) - \er_{\calD}(g; V, \pi), \\
    \hat{g}_{V, \pi} & = \argmin_{g \in \calG} \er_{\calD}(g; V, \pi)
\end{align*}

\subsection{Dependence on the helper function class $\calG$}
Let $\bar{g}_{V, \pi} \triangleq \argmin_{g \in \calG} \E [\er_{\calD}(g; V, \pi)] = \argmin_{g \in \calG} \|g - \calC_\lambda^\pi V \|^2_{2, \mu}$ the best function in class $\calG$.
\begin{align*}
    \er_{\calD}(\hat{g}_{V, \pi}; V, \pi) -  \er_{\calD}(\calC_\lambda^\pi V; V, \pi) & = 
    \er_{\calD}(\hat{g}_{V, \pi}; V, \pi) - \er_{\calD}(\bar{g}_{V, \pi}; V, \pi)
    + \er_{\calD}(\bar{g}_{V, \pi}; V, \pi) 
    - \er_{\calD}(\calC_\lambda^\pi V; V, \pi) \\
    & = \frac{1}{n} \sum_{i=1}^n X_i(\hat{g}_{V, \pi}, V, \pi,\bar{g}_{V, \pi}) + \frac{1}{n} \sum_{i=1}^n Y_i(\bar{g}_{V, \pi}, V, \pi),
\end{align*}
where we define the following random variables
\begin{align*}
    & X(\hat{g}_{V, \pi}, V, \pi,\bar{g}_{V, \pi}) \triangleq 
    (\hat{g}_{V, \pi}(s, a) - r - \gamma V(s') + \lambda \ln \pi(a \mid s))^2 - (\bar{g}_{V, \pi}(s, a) - r - \gamma V(s') + \lambda \ln \pi(a \mid s))^2 \\
    & Y(\bar{g}_{V, \pi}, V, \pi) \triangleq (\bar{g}_{V, \pi}(s, a) - r - \gamma V(s') + \lambda \ln \pi(a \mid s))^2 - ((\calC_\lambda^\pi V)(s, a) - r - \gamma V(s') + \lambda \ln \pi(a \mid s))^2
\end{align*}
and for $i \in [n]$, $X_i(\hat{g}_{V, \pi}, V, \pi,\bar{g}_{V, \pi})$ and $Y_i(\bar{g}_{V, \pi}, V, \pi)$ is an i.i.d sample when $(s, a, r, s') = (s_i, a_i, r_i, s'_i)$.
\begin{lemma}[Properties of $X(\hat{g}_{V, \pi}, V, \pi,\bar{g}_{V, \pi})$] \label{lemma: X} We have 
\begin{compactenum}[\hspace{10pt} (i)]
    \item $| X(\hat{g}_{V, \pi}, V, \pi,\bar{g}_{V, \pi})| \leq 8 V_{\lambda, \max}^2 $
    \item $\E[X(\hat{g}_{V, \pi}, V, \pi,\bar{g}_{V, \pi})] = \| \hat{g}_{V, \pi} - \calC_\lambda^\pi V \|^2_{2,\mu} - \| \bar{g}_{V, \pi} - \calC_\lambda^\pi V \|^2_{2,\mu} \geq 0$
    \item $\var[X(\hat{g}_{V, \pi}, V, \pi,\bar{g}_{V, \pi})] \leq 32 V_{\lambda, \max}^2 \left( \E[X(\hat{g}_{V, \pi}, V, \pi,\bar{g}_{V, \pi})]  + 2 \epsilon_{\calG, \calV, \calP}\right) $
\end{compactenum}
\end{lemma}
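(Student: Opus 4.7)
The plan is to introduce the shorthand $Z \triangleq r + \gamma V(s') - \lambda \ln \pi(a \mid s)$, so that $X = (\hat g_{V,\pi}(s,a) - Z)^2 - (\bar g_{V,\pi}(s,a) - Z)^2$ and, crucially, $\E[Z \mid s, a] = (\calC_\lambda^\pi V)(s,a)$ by the definition of the consistency operator and the draws $r = R(s,a)$, $s' \sim P(\cdot \mid s,a)$. A preliminary observation I would use throughout is that $Z \in [0, 2 V_{\lambda, \max}]$: non-negativity follows from $r, V(s') \geq 0$ and $\pi(a\mid s) \leq 1$, while the upper bound follows from $r \leq R_{\max} \leq (1-\gamma) V_{\lambda, \max}$, $\gamma V(s') \leq \gamma V_{\lambda, \max}$, and the constraint $\|\ln \pi\|_\infty \leq V_{\lambda, \max}/\lambda$ imposed on $\calP$, which gives $-\lambda \ln \pi(a\mid s) \leq V_{\lambda, \max}$.

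For (i), since $\hat g_{V,\pi}, \bar g_{V,\pi}$ both lie in $\calG \subset [0, 2 V_{\lambda, \max}]^{\S \times \A}$ and $Z$ is in the same interval, each residual $g(s,a) - Z$ is pointwise in $[-2 V_{\lambda, \max}, 2 V_{\lambda, \max}]$. Its square is thus at most $4 V_{\lambda, \max}^2$, and the difference of two such squares is at most $8 V_{\lambda, \max}^2$.

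For (ii), I would use the standard bias/conditional-variance decomposition: for any fixed $g$, adding and subtracting $\E[Z \mid s,a] = (\calC_\lambda^\pi V)(s,a)$ inside the square and using that the cross term vanishes yields
$$\E\bigl[(g(s,a) - Z)^2\bigr] = \|g - \calC_\lambda^\pi V\|_{2,\mu}^2 + \gamma^2 \E_{(s,a)\sim\mu}\bigl[\var_{s' \sim P(\cdot \mid s,a)}[V(s')]\bigr].$$
Since the conditional-variance term is independent of $g$, subtracting the two instances $g = \hat g_{V,\pi}$ and $g = \bar g_{V,\pi}$ cancels it and gives the identity in (ii). Non-negativity then follows because $\bar g_{V,\pi}$ minimizes $\|g - \calC_\lambda^\pi V\|_{2,\mu}^2$ over $\calG$ by definition.

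For (iii), which I expect to be the main obstacle, I would start from $\var[X] \leq \E[X^2]$ and apply the factorization
$$X = \bigl(\hat g_{V,\pi}(s,a) - \bar g_{V,\pi}(s,a)\bigr)\bigl(\hat g_{V,\pi}(s,a) + \bar g_{V,\pi}(s,a) - 2 Z\bigr).$$
The second factor is pointwise bounded by $4 V_{\lambda, \max}$ by the interval bounds above, so $\E[X^2] \leq 16 V_{\lambda, \max}^2 \|\hat g_{V,\pi} - \bar g_{V,\pi}\|_{2,\mu}^2$. The difficulty at this step is that $\calG$ is not assumed convex, so $\bar g_{V,\pi}$ is not a Hilbert-space projection of $\calC_\lambda^\pi V$ and I cannot collapse $\|\hat g - \bar g\|_{2,\mu}^2$ to $\E[X]$ via a Pythagorean identity. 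The workaround is the crude triangle-type inequality $(a+b)^2 \leq 2a^2 + 2b^2$, which gives
$$\|\hat g_{V,\pi} - \bar g_{V,\pi}\|_{2,\mu}^2 \leq 2 \|\hat g_{V,\pi} - \calC_\lambda^\pi V\|_{2,\mu}^2 + 2 \|\bar g_{V,\pi} - \calC_\lambda^\pi V\|_{2,\mu}^2.$$
Using (ii), the first summand equals $\E[X] + \|\bar g_{V,\pi} - \calC_\lambda^\pi V\|_{2,\mu}^2$, and the residual approximation terms are controlled by $\|\bar g_{V,\pi} - \calC_\lambda^\pi V\|_{2,\mu}^2 \leq \epsilon_{\calG, \calV, \calP}$. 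Chaining these estimates collapses $\E[X^2]$ to the claimed Bernstein-style bound $32 V_{\lambda, \max}^2 (\E[X] + 2 \epsilon_{\calG, \calV, \calP})$.
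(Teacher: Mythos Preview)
Your proposal is correct and follows essentially the same approach as the paper: the same interval bounds for (i), the same bias/conditional-variance cancellation for (ii) (the paper phrases it as $\|g - \calC_\lambda^\pi V\|_{2,\mu}^2 = \E[\er_\calD(g;V,\pi)] - \E[\er_\calD(\calC_\lambda^\pi V;V,\pi)]$, which is your decomposition rearranged), and the identical factorization plus $(a+b)^2 \leq 2a^2 + 2b^2$ splitting for (iii). Your explicit remark that $\calG$ need not be convex, forcing the crude $2a^2+2b^2$ bound rather than a Pythagorean identity, is a nice clarification the paper leaves implicit.
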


\begin{proof}
For $(i)$, we have for any $(V,\pi, g) \in (\calV, \calP, \calG)$, $0 \leq r + \gamma V(s') - \lambda \ln \pi(a \mid s) \leq 2 V_{\lambda, \max}$ and $0 \leq g(s, a) \leq 2 V_{\lambda, \max}$ by definition of the functions classes, which implies 
$|g(s, a)- r - \gamma V(s') + \lambda \ln \pi(a \mid s)| \leq 2 V_{\lambda, \max}$. Therefore $| X(\hat{g}_{V, \pi}, V, \pi,\bar{g}_{V, \pi})| \leq 8 V_{\lambda, \max}^2 $.

For $(ii)$, we have
\begin{equation}
    \forall g \in \Real^{\S \times \A}, \| g - \calC_\lambda^\pi V \|^2_{2,\mu} = \E[\er_{\calD}(g; V, \pi)] - \E[\er_{\calD}(\calC_\lambda^\pi V; V, \pi)]
\end{equation}
Therefore, 
\begin{align*}
    \E[X(\hat{g}_{V, \pi}, V, \pi,\bar{g}_{V, \pi})] & = \E[\er_{\calD}(\hat{g}_{V, \pi}; V, \pi)] - \E[\er_{\calD}(\bar{g}_{V, \pi}; V, \pi)] \\
    & = \left( \E[\er_{\calD}(\hat{g}_{V, \pi}; V, \pi)] - \E[\er_{\calD}(\calC_\lambda^\pi V; V, \pi)] \right) - \left( \E[\er_{\calD}(\bar{g}_{V, \pi}; V, \pi)] - \E[\er_{\calD}(\calC_\lambda^\pi V; V, \pi)]\right) \\
    & = \| \hat{g}_{V, \pi} - \calC_\lambda^\pi V \|^2_{2,\mu} - \| \bar{g}_{V, \pi} - \calC_\lambda^\pi V \|^2_{2,\mu}
\end{align*}
and $\E[X(\hat{g}_{V, \pi}, V, \pi,\bar{g}_{V, \pi})] \geq 0$ by optimality of $\bar{g}_{V, \pi}$.

For $(iii)$, 
\begin{align*}
    & \var[X(\hat{g}_{V, \pi}, V, \pi,\bar{g}_{V, \pi})] \\ 
    & \leq \E[X(\hat{g}_{V, \pi}, V, \pi,\bar{g}_{V, \pi})^2] \\
    & = \E \Big[\Big( 
    (\hat{g}_{V, \pi}(s, a) - r - \gamma V(s') + \lambda \ln \pi(a \mid s))^2 - (\bar{g}_{V, \pi}(s, a) - r - \gamma V(s') + \lambda \ln \pi(a \mid s))^2 \Big)^2 \Big] \\
    & = \E \Big[ ( \hat{g}_{V, \pi}(s, a) - \bar{g}_{V, \pi}(s, a) )^2 (\hat{g}_{V, \pi}(s, a) +  \bar{g}_{V, \pi}(s, a) - 2 r - 2\gamma V(s') + 2\lambda \ln \pi(a \mid s))^2 \Big] \tag{$a^2 - b^2 = (a-b) (a+b)$}\\
    & \leq 16 V_{\lambda, \max}^2 \| \hat{g}_{V, \pi} - \bar{g}_{V, \pi}\|_{2,\mu}^2 \\
    & \leq 32 V_{\lambda, \max}^2 \left(\| \hat{g}_{V, \pi} - \calC_\lambda^\pi V \|_{2, \mu}^2 + \| \calC_\lambda^\pi V - \bar{g}_{V, \pi} \|_{2,\mu}^2\right) \tag{$(a +b)^2 \leq 2 a^2 + 2 b^2$}\\
    & = 32 V_{\lambda, \max}^2 \left( \E[X(\hat{g}_{V, \pi}, V, \pi,\bar{g}_{V, \pi})]  + 2 \| \calC_\lambda^\pi V - \bar{g}_{V, \pi} \|_{2,\mu}^2\right) \\
    & \leq 32 V_{\lambda, \max}^2 \left( \E[X(\hat{g}_{V, \pi}, V, \pi,\bar{g}_{V, \pi})]  + 2 \epsilon_{\calG, \calV, \calP}\right),
\end{align*}
\end{proof}

\begin{lemma}[Properties of $Y(\bar{g}_{V, \pi}, V, \pi)$] \label{lemma: Y} We have 
\begin{compactenum}[\hspace{10pt} (i)]
    \item $| Y(\bar{g}_{V, \pi}, V, \pi)| \leq 8 V_{\lambda, \max}^2 $
    \item $\E[Y(\bar{g}_{V, \pi}, V, \pi)] = \| \bar{g}_{V, \pi} - \calC_\lambda^\pi V \|^2_{2,\mu} $
    \item $\var[Y(\bar{g}_{V, \pi}, V, \pi)] \leq 16 V_{\lambda, \max}^2 \epsilon_{\calG, \calV, \calP} $

\end{compactenum}

\end{lemma}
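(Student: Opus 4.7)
The plan is to prove the three claims in sequence, following the same structural template as the preceding lemma on $X$. The main simplification here is that $\bar g_{V,\pi}$ is deterministic in the data (it depends only on $\calG, V, \pi, \mu$), so there will be no additive $\E[Y]$ term in the variance bound, and approximation error alone will control $\var[Y]$.

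For (i), I would verify that $\bar g_{V,\pi}(s,a)$, $(\calC_\lambda^\pi V)(s,a)$, and $r + \gamma V(s') - \lambda \ln \pi(a\mid s)$ all lie in $[0, 2V_{\lambda,\max}]$. The first holds by the definition of $\calG$; the other two follow by separating the reward-plus-transition part, which is bounded by $R_{\max} + \gamma V_{\lambda,\max} = V_{\lambda,\max} - \lambda \ln |\A|$, from the $-\lambda \ln \pi$ part, which lies in $[0, V_{\lambda,\max}]$ by the constraint on $\calP$. Each squared term in the definition of $Y$ is therefore at most $4V_{\lambda,\max}^2$, and $|Y| \leq 8V_{\lambda,\max}^2$ follows.

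For (ii), the key observation is that $U \triangleq (\calC_\lambda^\pi V)(s,a) - r - \gamma V(s') + \lambda \ln \pi(a\mid s)$ satisfies $\E[U \mid s,a] = 0$ by the very definition of $\calC_\lambda^\pi$. For any fixed function $g$ on $\S \times \A$, decomposing $g(s,a) - r - \gamma V(s') + \lambda \ln \pi(a\mid s) = (g - \calC_\lambda^\pi V)(s,a) + U$, squaring, and taking expectation makes the cross-term vanish, giving
\[
\E\bigl[(g(s,a) - r - \gamma V(s') + \lambda \ln \pi(a\mid s))^2\bigr] = \|g - \calC_\lambda^\pi V\|_{2,\mu}^2 + \E[U^2].
\]
Applying this to $g = \bar g_{V,\pi}$ and $g = \calC_\lambda^\pi V$ and subtracting cancels $\E[U^2]$ and yields $\E[Y] = \|\bar g_{V,\pi} - \calC_\lambda^\pi V\|_{2,\mu}^2$.

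For (iii), I would use $\var[Y] \leq \E[Y^2]$ together with the factorization $Y = (a-b)(a+b-2c)$ where $a = \bar g_{V,\pi}(s,a)$, $b = (\calC_\lambda^\pi V)(s,a)$, and $c = r + \gamma V(s') - \lambda \ln \pi(a\mid s)$. By (i), $|a+b-2c| \leq 4V_{\lambda,\max}$ almost surely, and $(a-b)^2$ depends only on $(s,a)$, so $\E[Y^2] \leq 16V_{\lambda,\max}^2\,\|\bar g_{V,\pi} - \calC_\lambda^\pi V\|_{2,\mu}^2 \leq 16V_{\lambda,\max}^2\,\epsilon_{\calG, \calV, \calP}$, where the last step uses the optimality of $\bar g_{V,\pi}$ in $\calG$ together with Assumption~\ref{assump: realizability of G}. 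No step presents a genuine obstacle; the statement is valuable downstream precisely because the variance of each $Y_i$ is controlled purely by an approximation error term, which is exactly what will allow a Bernstein-type inequality to yield the fast $1/n$-rate contribution appearing in Theorem~\ref{theorem: performance}.
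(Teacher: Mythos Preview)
Your proposal is correct and mirrors the paper's proof: parts (i) and (ii) are handled exactly as in the analogous lemma for $X$ (the paper simply says ``similarly''), and your derivation of (iii) via $\var[Y]\le\E[Y^2]$, the factorization $a^2-b^2=(a-b)(a+b)$, the bound $|a+b-2c|\le 4V_{\lambda,\max}$, and the final appeal to Assumption~\ref{assump: realizability of G} is line-for-line the paper's argument. Your presentation of (ii) through the conditional-mean-zero residual $U$ is just an explicit unpacking of the identity $\E[\er_\calD(g;V,\pi)]-\E[\er_\calD(\calC_\lambda^\pi V;V,\pi)]=\|g-\calC_\lambda^\pi V\|_{2,\mu}^2$ that the paper invokes.
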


\begin{proof}
$(i)$ and $(ii)$ can be checked similarly to what was done in Lemma~\ref{lemma: X}. For $(iii)$, we have
\begin{align*}
    & \var[Y(\bar{g}_{V, \pi}, V, \pi)]\\
    & \leq \E[Y(\bar{g}_{V, \pi}, V, \pi)^2] \\
    & = \E \left[ \left( (\bar{g}_{V, \pi}(s, a) - r - \gamma V(s') + \lambda \ln \pi(a\mid s))^2 - ((\calC_\lambda^\pi V)(s, a) - r - \gamma V(s') + \lambda \ln \pi(a \mid s))^2\right)^2 \right] \\
    & = \E \left[ (\bar{g}_{V, \pi}(s, a) - (\calC_\lambda^\pi V)(s, a))^2 \cdot (\bar{g}_{V, \pi}(s, a) +  (\calC_\lambda^\pi V)(s, a) -2 r - 2\gamma V(s') + 2\lambda \ln \pi(a\mid s))^2\right] \\
    & \leq 16 V_{\lambda, \max}^2 \| \bar{g}_{V, \pi} - \calC_\lambda^\pi V\|_{2, \mu}^2 \\
    & \leq 16 V_{\lambda, \max}^2 \epsilon_{\calG, \calV, \calP}
\end{align*}
\end{proof}

\subsection{Bound on $\frac{1}{n} \sum_{i=1}^n X_i(\hat{g}_{V, \pi}, V, \pi,\bar{g}_{V, \pi})$}

We apply Bernstein inequality (Lemma~\ref{lemma: bernstein}) and union bound over all $V, \pi, g \in \calV \times \calP \times \calG$. With probability at least $1 - \delta'$, we have

\begin{align} \label{eq: bernstein_1}
     \Big| \frac{1}{n} \sum_{i=1}^n X_i(\hat{g}_{V, \pi}, V, \pi,\bar{g}_{V, \pi})  - \E[X(\hat{g}_{V, \pi}, V, \pi,\bar{g}_{V, \pi})] \Big|
     & \leq \sqrt{\frac{2 \var[X(\hat{g}_{V, \pi}, V, \pi,\bar{g}_{V, \pi})] \ln \frac{|\calV| |\calP| |\calG|}{\delta'}}{n}} + \frac{16 V_{\lambda, \max}^2  \ln \frac{|\calV| |\calP| |\calG|}{\delta'}}{3n} 
\end{align}
We use the variance bound in $(iii)$ of Lemma~\ref{lemma: X} and we define the log factor $\imath \triangleq V_{\lambda, \max}^2  \ln \frac{|\calV| |\calP| |\calG|}{\delta'}$. Equation~\ref{eq: bernstein_1} becomes
\begin{align} \label{eq: bernstein_2}
     \Big| \frac{1}{n} \sum_{i=1}^n X_i(\hat{g}_{V, \pi}, V, \pi,\bar{g}_{V, \pi})  - \E[X(\hat{g}_{V, \pi}, V, \pi,\bar{g}_{V, \pi})] \Big|
     & \leq \sqrt{\frac{ 64 \imath \left( \E[X(\hat{g}_{V, \pi}, V, \pi,\bar{g}_{V, \pi})]  + 2 \epsilon_{\calG, \calV, \calP}\right)}{n}} + \frac{16 \imath}{3n} 
\end{align}
Since $\frac{1}{n} \sum_{i=1}^n X_i(\hat{g}_{V, \pi}, V, \pi,\bar{g}_{V, \pi}) = \er_{\calD}(\hat{g}_{V, \pi}; V, \pi) - \er_{\calD}(\bar{g}_{V, \pi}; V, \pi) \leq \er_{\calD}(\hat{g}_{V, \pi}; V, \pi) - \er_{\calD}(\hat{g}_{V, \pi}; V, \pi)= 0 $ by the optimality of $\hat{g}_{V, \pi}$, we obtain

\begin{align}
      \E[X(\hat{g}_{V, \pi}, V, \pi,\bar{g}_{V, \pi})] 
     & \leq \sqrt{\frac{ 64 \imath \left( \E[X(\hat{g}_{V, \pi}, V, \pi,\bar{g}_{V, \pi})]  + 2 \epsilon_{\calG, \calV, \calP}\right)}{n}} \quad + \frac{16 \imath}{3n} 
\end{align}

Using the fact $0 \leq x \leq \sqrt{ax + b} + c \Rightarrow 
    x \leq  a + \sqrt{a^2 + 2 (b + c^2)}$ (cf Lemma~\ref{lemma: quadratic inequality}) and by setting $a = \frac{64 \imath}{n}$, $b = \frac{128 \imath}{n} \epsilon_{\calG, \calV, \calP}$ and $c = \frac{16 \imath}{3n}$, we obtain 
    
\begin{align*}
\E[X(\hat{g}_{V, \pi}, V, \pi,\bar{g}_{V, \pi})] 
    & \leq \sqrt{\left( \frac{64 \imath}{n} \right)^2 + 2 \frac{128 \imath}{n} \epsilon_{\calG, \calV, \calP} + 2 \left( \frac{16 \imath}{3n} \right)^2}+ \frac{ 64\imath}{n} \\
    & \leq {\cal O} \left( \sqrt{\left(\frac{ \imath}{n}\right)^2 +  \frac{ \imath}{n} \epsilon_{\calG, \calV, \calP}} 
    + \frac{\imath}{n} \right) \\
    & \leq {\cal O} \left( \frac{\imath}{n} + \sqrt{\frac{ \imath}{n} \epsilon_{\calG, \calV, \calP}} + \frac{\imath}{n} \right) \tag{$\sqrt{a+b} \leq \sqrt{a} + \sqrt{b}$}\\
    & = {\cal O} \left( \frac{\imath}{n} + \sqrt{\frac{\imath}{n} \epsilon_{\calG, \calV, \calP}} \right)
\end{align*}

Substituting the above bound of $\E[X(\hat{g}_{V, \pi}, V, \pi,\bar{g}_{V, \pi})]$ in the inequality~\ref{eq: bernstein_2}, we obtain

\begin{align*}
     \Big| \frac{1}{n} \sum_{i=1}^n X_i(\hat{g}_{V, \pi}, V, \pi,\bar{g}_{V, \pi}) \Big | 
     & \leq {\cal O} \Big( \frac{\imath}{n} + \sqrt{\frac{\imath}{n} \epsilon_{\calG, \calV, \calP}} + \sqrt{\frac{ \imath \left( \frac{\imath}{n} + \sqrt{\frac{\imath}{n} \epsilon_{\calG, \calV, \calP}} + \epsilon_{\calG, \calV, \calP}\right)}{n}} + \frac{\imath}{n} \Big) \\
     & \leq {\cal O} \Big( \frac{\imath}{n} + \sqrt{\frac{\imath}{n} \epsilon_{\calG, \calV, \calP}} + \sqrt{ \left(\frac{\imath}{n}\right)^2 + \frac{i}{n} ( \frac{i}{n} + \epsilon_{\calG, \calV, \calP}) } \Big) \tag{$2 \sqrt{ab} \leq a + b$} \\ 
     & \leq {\cal O} \Big (  \frac{\imath}{n} + \sqrt{\frac{\imath}{n} \epsilon_{\calG, \calV, \calP}} \Big) \tag{$\sqrt{a + b} \leq \sqrt{a} + \sqrt{b} $}
\end{align*}

\subsection{Bound on $\frac{1}{n} \sum_{i=1}^n Y_i(\bar{g}_{V, \pi}, V, \pi)$}

We apply Bernstein inequality and union bound over $V, \pi, g \in \calV, \calP, \calG$, we have with probability at least $1-\delta'$
\begin{align*}
    \Big | \frac{1}{n} \sum_{i=1}^n Y_i(\bar{g}_{V, \pi}, V, \pi) - \E[Y(\bar{g}_{V, \pi}, V, \pi)]\Big | \leq \sqrt{\frac{2 \var[Y(\bar{g}_{V, \pi}, V, \pi)] \ln \frac{|\calV| |\calP| |\calG|}{\delta'}}{n}} + \frac{16 V_{\lambda, \max}^2 \ln \frac{|\calV| |\calP| |\calG|}{\delta'}}{3n}
\end{align*}
From $(ii)$ of Lemma~\ref{lemma: Y}, we have $\E[Y(\bar{g}_{V, \pi}, V, \pi)] = \| \bar{g}_{V, \pi} - \calC_\lambda^\pi V \|^2_{2,\mu} = \min_{g \in \calG} \| g - \calC_\lambda^\pi V \|^2_{2,\mu}$, then $\E[Y(\bar{g}_{V, \pi}, V, \pi)] \leq \epsilon_{\calG, \calV, \calP}$. Using the variance bound $(iii)$ of Lemma~\ref{lemma: Y}, we obtain 
\begin{align}
    \Big | \frac{1}{n} \sum_{i=1}^n Y_i(\bar{g}_{V, \pi}, V, \pi) \Big | & \leq \epsilon_{\calG, \calV, \calP} 
    + \sqrt{\frac{32 V_{\lambda, \max}^2 \epsilon_{\calG, \calV, \calP} \ln \frac{|\calV| |\calP| |\calG|}{\delta'}}{n}} + \frac{4 V_{\lambda, \max}^2 \ln \frac{|\calV| |\calP| |\calG|}{\delta'}}{3n} \nonumber \\
    & = {\cal O} \Big(\epsilon_{\calG, \calV, \calP} 
    + \sqrt{\frac{\imath}{n} \epsilon_{\calG, \calV, \calP}} + \frac{\imath}{n} \Big)
\end{align}

\subsection{Bound on $\er_{\calD}(\hat{g}_{V, \pi}; V, \pi) -  \er_{\calD}(\calC_\lambda^\pi V; V, \pi)$}
We have with probability at least $1- 2 \delta'$, for all $V, \pi \in \calV \times \calP$
\begin{align*}
    \Big | \er_{\calD}(\hat{g}_{V, \pi}; V, \pi) -  \er_{\calD}(\calC_\lambda^\pi V; V, \pi) \Big | 
    & = \Big | \frac{1}{n} \sum_{i=1}^n X_i(\hat{g}_{V, \pi}, V, \pi,\bar{g}_{V, \pi})  + \frac{1}{n} \sum_{i=1}^n Y_i(\bar{g}_{V, \pi}, V, \pi) \Big | \\
    & \leq {\cal O} \Big (  \frac{\imath}{n} + \sqrt{\frac{\imath}{n} \epsilon_{\calG, \calV, \calP}} \Big) + {\cal O} \Big(\epsilon_{\calG, \calV, \calP} 
    + \sqrt{\frac{\imath}{n} \epsilon_{\calG, \calV, \calP}} + \frac{\imath}{n} \Big) \\
    & =  {\cal O} \Big(\epsilon_{\calG, \calV, \calP} 
    + \sqrt{\frac{\imath}{n} \epsilon_{\calG, \calV, \calP}} + \frac{\imath}{n} \Big)
\end{align*}

\subsection{Bound on $\| \hat{V}- \calC_\lambda^{\hat{\pi}} \hat{V}\|_{2, \mu}$}
Denote $\bar{V}$ and $\bar{\pi}$ the best solution in the function class: $\bar{V}, \bar{\pi} \triangleq \argmin_{V \in \calV, \pi \calP} \| V- \calC_\lambda^{\pi} V\|_{2, \mu}^2$.

Define for all $V \in \calV$ and $\pi \in \calP$, the random variable: 
\begin{equation*}
    Z(V, \pi) \triangleq (V(s) - r - \gamma V(s') + \lambda \ln \pi(a \mid s))^2 - ((\calC_\lambda^\pi V)(s, a) - r- \gamma V(s') + \lambda \ln \pi(a \mid s))^2
\end{equation*}

\begin{lemma}[Properties of $Z(V, \pi)$] \label{lemma: Z} We have 
\begin{compactenum}[\hspace{10pt} (i)]
    \item $|  Z(V, \pi)| \leq 8 V_{\lambda, \max}^2 $
    \item $\E[Z(V, \pi)] = \| V - \calC_\lambda^\pi V \|^2_{2,\mu} $
    \item $\var[Z(V, \pi)] \leq 16 V_{\lambda, \max}^2 \epsilon_{\calG, \calV, \calP} $

\end{compactenum}

\end{lemma}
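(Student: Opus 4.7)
The three items mirror the structure of Lemmas~\ref{lemma: X} and~\ref{lemma: Y}, so my plan is to reuse the same three techniques (range arguments, the Bayes regressor / bias-variance identity, and the difference-of-squares factorization) with $\hat g_{V,\pi}$ or $\bar g_{V,\pi}$ replaced by the value function $V(s)$ regarded as a function of $(s,a)$.

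For (i), I would observe that by the definitions of $\calV$ and $\calP$ and the argument already recorded in Lemma~\ref{lemma: X}(i), each of the two quantities inside the squares lies in $[-2V_{\lambda,\max}, 2V_{\lambda,\max}]$: indeed $r + \gamma V(s') - \lambda \ln \pi(a\mid s) \in [0, 2V_{\lambda,\max}]$, and both $V(s)$ and $(\calC_\lambda^\pi V)(s,a)$ lie in $[0, 2V_{\lambda,\max}]$. Hence each squared residual is at most $4V_{\lambda,\max}^2$ and the difference is at most $8V_{\lambda,\max}^2$ in absolute value.

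For (ii), I would use the Bayes-regressor identity. Let $T \triangleq r + \gamma V(s') - \lambda \ln \pi(a\mid s)$. By the sampling model, $\E[T \mid s,a] = R(s,a) + \gamma(\P V)(s,a) - \lambda \ln \pi(a\mid s) = (\calC_\lambda^\pi V)(s,a)$, so for any deterministic $f : \S\times\A \to \Real$ the tower property gives $\E[(f(s,a) - T)^2] = \|f - \calC_\lambda^\pi V\|^2_{2,\mu} + \E[(\calC_\lambda^\pi V(s,a) - T)^2]$. Applying this with $f(s,a) = V(s)$ and with $f = \calC_\lambda^\pi V$, the irreducible noise terms cancel and $\E[Z(V,\pi)] = \|V - \calC_\lambda^\pi V\|^2_{2,\mu}$, as claimed.

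For (iii), I would bound $\var[Z] \leq \E[Z^2]$ and factor the difference of squares exactly as in Lemma~\ref{lemma: X}(iii): writing $Z(V,\pi) = (V(s) - \calC_\lambda^\pi V(s,a))(V(s) + \calC_\lambda^\pi V(s,a) - 2T)$, the second factor is bounded uniformly by $4V_{\lambda,\max}$ because each summand is in a $[-2V_{\lambda,\max}, 2V_{\lambda,\max}]$-range. Therefore $Z^2 \leq 16 V_{\lambda,\max}^2 \,(V(s) - \calC_\lambda^\pi V(s,a))^2$, and taking expectations yields $\var[Z(V,\pi)] \leq 16 V_{\lambda,\max}^2 \|V - \calC_\lambda^\pi V\|^2_{2,\mu}$.

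The only real obstacle is interpretational rather than mathematical: the displayed right-hand side of (iii) is $16V_{\lambda,\max}^2 \epsilon_{\calG,\calV,\calP}$, yet $Z$ does not involve the helper class $\calG$. I expect this lemma is subsequently invoked only at the best-in-class pair $(V,\pi)=(\bar V,\bar\pi)$, where $\|V - \calC_\lambda^\pi V\|^2_{2,\mu} = \epsilon_{\calV,\calP}$, so the symbol should be read as $\epsilon_{\calV,\calP}$ (or be upper-bounded by the larger of the two approximation errors). The uniform bound my proof actually establishes, $16V_{\lambda,\max}^2 \|V-\calC_\lambda^\pi V\|^2_{2,\mu}$, is in any case enough to drive the subsequent Bernstein argument on $\frac{1}{n}\sum_i Z_i(\bar V,\bar \pi)$ through to the stated $\sqrt{\epsilon_{\calV,\calP}}$ contribution in Theorem~\ref{theorem: performance}.
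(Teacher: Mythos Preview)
Your proposal is correct and matches the paper's own proof essentially step for step: (i) by range bounds, (ii) via the bias--variance decomposition for the Bayes regressor, and (iii) via the difference-of-squares factorization bounding the cross factor by $4V_{\lambda,\max}$. Your diagnosis of item~(iii) is also right: the paper's proof in fact establishes $\var[Z(V,\pi)] \le 16 V_{\lambda,\max}^2 \|V-\calC_\lambda^\pi V\|_{2,\mu}^2 = 16 V_{\lambda,\max}^2\,\E[Z(V,\pi)]$, not a bound in $\epsilon_{\calG,\calV,\calP}$, and this is exactly the form used downstream (once for $(\hat V,\hat\pi)$ inside the Bernstein self-bounding step, and once at $(\bar V,\bar\pi)$ where it specializes to $16V_{\lambda,\max}^2\,\epsilon_{\calV,\calP}$).
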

\begin{proof}
$(i)$ is obvious. For $(ii)$, we have
\begin{align*}
\E[Z(V, \pi)] & = \E \left[ \frac{1}{n} \sum_{i=1}^n Z_i(V, \pi) \right] =  \E [\el_{\calD}(V; V, \pi)] - \E [\er_{\calD}(\calC_\lambda^{\pi}; V, \pi)] = \| V- \calC_\lambda^{\pi} V\|_{2, \mu}^2
\end{align*}
For $(iii)$,
\begin{align*}
\var[Z(V, \pi)] & \leq \E[Z(V, \pi)^2] \\
    & \leq \E \left [ \left( (V(s) - r - \gamma V(s') + \lambda \ln \pi(a \mid s))^2 - ((\calC_\lambda^\pi V)(s, a) - r- \gamma V(s') + \lambda \ln \pi(a \mid s))^2 \right)^2\right]  \\
    & \leq \E \left [ (V(s) - (\calC_\lambda^\pi V)(s, a))^2 \cdot (V(s) + (\calC_\lambda^\pi V)(s, a) - 2r- 2\gamma V(s') + 2 \lambda \ln \pi(a \mid s))^2 \right] \\
    & \leq 16 V_{\lambda, \max}^2 \| V- \calC_\lambda^{\pi} V\|_{2, \mu}^2 \\
    & =  16 V_{\lambda, \max}^2 \E[Z(V, \pi)]
\end{align*}
\end{proof}

We would like to bound $\| \hat{V}- \calC_\lambda^{\hat{\pi}} \hat{V}\|_{2, \mu}^2 = \E[Z(\hat{V}, \hat{\pi})]$. We apply Bernstein inequality and union bound over all $V \in \calV$ and $\pi \in \calP$, we have with probability at least $1-\delta'$
\begin{align*}
    & \Big | \E[Z(\hat{V}, \hat{\pi})] - \frac{1}{n} \sum_{i=1}^n Z_i(\hat{V}, \hat{\pi})\Big | \leq  \sqrt{ \frac{2 \var[Z(\hat{V}, \hat{\pi})] \ln \frac{|\calV| |\calP|}{\delta'}}{n}} + \frac{16 V_{\lambda, \max}^2 \ln \frac{|\calV| |\calP|}{\delta'} }{3 n} \\
    & \Rightarrow \E[Z(\hat{V}, \hat{\pi})] \leq \frac{1}{n} \sum_{i=1}^n Z_i(\hat{V}, \hat{\pi}) + \sqrt{ \frac{32 \jmath \E[Z(\hat{V}, \hat{\pi})]}{n}} + \frac{4 \jmath }{3 n}, \tag{$(iii)$ of Lemma~\ref{lemma: Z}}
\end{align*}

where $\jmath \triangleq V_{\lambda, \max}^2 \ln \frac{|\calV| |\calP|}{\delta'}$. We need to bound $\frac{1}{n} \sum_{i=1}^n Z_i(\hat{V}, \hat{\pi})$.

\begin{align*}
    \frac{1}{n} \sum_{i=1}^n Z_i(\hat{V}, \hat{\pi}) & = \el_{\calD}(\hat{V}; \hat{V}, \hat{\pi}) - \er_{\calD}(\calC_\lambda^{\pi'}; \hat{V}, \hat{\pi}) \\
    & = \el_{\calD}(\hat{V}; \hat{V}, \hat{\pi}) - \er_{\calD}(\hat{g}_{\hat{V}, \hat{\pi}}; \hat{V}, \hat{\pi}) + 
    \er_{\calD}(\hat{g}_{\hat{V}, \hat{\pi}}; \hat{V}, \hat{\pi}) - \er_{\calD}(\calC_\lambda^{\hat{\pi}}; \hat{V}, \hat{\pi}) \\
    & \leq \el_{\calD}(\bar{V}; \bar{V}, \bar{\pi}) - \er_{\calD}(\hat{g}_{\bar{V}, \bar{\pi}}; \bar{V}, \bar{\pi}) + 
    \er_{\calD}(\hat{g}_{\hat{V}, \hat{\pi}}; \hat{V}, \hat{\pi}) - \er_{\calD}(\calC_\lambda^{\hat{\pi}}; \hat{V}, \hat{\pi}) \tag{optimality of $\hat{V}, \hat{\pi}$}\\
    & \leq \frac{1}{n} \sum_{i=1}^n Z_i(\bar{V}, \bar{\pi}) + \er_{\calD}(\calC_\lambda^{\bar{\pi}}; \bar{V}, \bar{\pi}) - \er_{\calD}(\hat{g}_{\bar{V}, \bar{\pi}}; \bar{V}, \bar{\pi}) + \er_{\calD}(\hat{g}_{\hat{V}, \hat{\pi}}; \hat{V}, \hat{\pi}) - \er_{\calD}(\calC_\lambda^{\hat{\pi}}; \hat{V}, \hat{\pi}) \\
    & \leq \frac{1}{n} \sum_{i=1}^n Z_i(\bar{V}, \bar{\pi}) + 2 \cdot {\cal O} \Big(\epsilon_{\calG, \calV, \calP} 
    + \sqrt{\frac{\imath}{n} \epsilon_{\calG, \calV, \calP}} + \frac{\imath}{n} \Big) \tag{with probability at least $1- 2 \delta'$}
\end{align*}
We have
\begin{align*}
   \E[Z(\bar{V}, \bar{\pi})] & = \| \bar{V} - \calC_\lambda^{\bar{\pi}} \bar{V}\|_{2, \mu}^2 = \min_{V \in \calV, \pi \in \calP} \| V - \calC_\lambda^{\pi} V\|_{2, \mu}^2 
   = \epsilon_{\calV, \calP} \\
    \var[Z(\bar{V}, \bar{\pi})] & \leq 16 V_{\lambda, \max}^2 \E[Z(\bar{V}, \bar{\pi})] = 16 V_{\lambda, \max}^2 \| \bar{V} - \calC_\lambda^{\bar{\pi}} \bar{V}\|_{2, \mu}^2
    = 16 V_{\lambda, \max}^2 \epsilon_{\calV, \calP}
\end{align*}

We have with probability $1 - \delta'$
\begin{align*}
     \frac{1}{n} \sum_{i=1}^n Z_i(\bar{V}, \bar{\pi}) &\leq \E[Z(\bar{V}, \bar{\pi})] + \sqrt{ \frac{2  \var[Z(\hat{V}, \hat{\pi})] \ln \frac{|\calV| |\calP|}{\delta'}}{n}} + \frac{16 V_{\lambda, \max}^2 \ln \frac{|\calV| |\calP|}{\delta'}}{3 n} \\
     & \leq \epsilon_{\calV, \calP} + \sqrt{ \frac{32 \jmath}{n} \epsilon_{\calV, \calP}} + \frac{4 \jmath }{3 n}
\end{align*}

this implies that with probability $1- 3 \delta '$
\begin{equation*}
    \frac{1}{n} \sum_{i=1}^n Z_i(\hat{V}, \hat{\pi}) = {\cal O} \Big ( \eta_1 + \eta_2 \Big),
\end{equation*}
where
\begin{align*}
    \eta_1 & = \epsilon_{\calV, \calP} + \sqrt{ \frac{\jmath}{n} \epsilon_{\calV, \calP}} + \frac{\jmath }{n} \\
    \eta_2 & = \epsilon_{\calG, \calV, \calP} 
    + \sqrt{\frac{\imath}{n} \epsilon_{\calG, \calV, \calP}} + \frac{\imath}{n}
\end{align*}
Therefore, with probabitily at least $1-4 \delta'$, we have

\begin{align*}
    \E[Z(\hat{V}, \hat{\pi})] &\leq {\cal O} \Big( \eta_1 + \eta_2 + \sqrt{ \frac{\jmath}{n}\E[Z(\hat{V}, \hat{\pi})]} + \frac{\jmath }{n} \Big) \\
    \Rightarrow  \E[Z(\hat{V}, \hat{\pi})] & \leq {\cal O} \Big( 
    \frac{\jmath}{n} + \sqrt{ \left( \frac{\jmath}{n} \right)^2 + \left( \frac{\jmath}{n} + \eta_1 + \eta_2 \right)^2}
    \Big) \\
   & \leq {\cal O} \Big( \frac{\jmath}{n} + \frac{\jmath}{n} + \frac{\jmath}{n} + \eta_1 + \eta_2 \Big) \\
   & \leq {\cal O} \Big( \frac{\jmath}{n} + \eta_1 + \eta_2 \Big) \\
\end{align*}
This implies that with probability $1-4 \delta'$, we have
\begin{align*}
    \| \hat{V} - \calC_\lambda^{\hat{\pi} \hat{V} }\|_{2, \mu} & \leq {\cal O} \Big ( \sqrt{\frac{j}{n} + \eta_1 + \eta_2} \Big) \\
    & \leq  {\cal O} \Big ( \sqrt{\frac{j}{n}} + \sqrt{\eta_1} + \sqrt{\eta_2} \Big) \\
    & \leq {\cal O} \Big ( \sqrt{\frac{j}{n}} + \sqrt{\epsilon_{\calV, \calP}} + \sqrt[4]{ \frac{\jmath}{n} \epsilon_{\calV, \calP}}+ \sqrt{\epsilon_{\calG, \calV, \calP}} + \sqrt[4]{\frac{\imath}{n} \epsilon_{\calG, \calV, \calP}} + \sqrt{\frac{\imath}{n}}\Big)
\end{align*}

\section{Technical Lemmas} \label{sec: technical lemmas}

\begin{lemma}[Bernstein inequality] \label{lemma: bernstein} let $X_1, \ldots, X_n$ be i.i.d and suppose $|X_i| \leq c$ and $\E[X_i] = \mu$. With probability at least $1-\delta$,
\begin{equation}
    \Big |\frac{1}{n} \sum_{i=1}^n X_i - \mu \Big | \leq \sqrt{\frac{2 \sigma^2 \ln (1/ \delta)}{n}} + \frac{2 c \ln(1/ \delta)}{3n},
\end{equation}
where $\sigma^2 = \frac{1}{n} \sum_{i=1}^n \var[X_i]$
\end{lemma}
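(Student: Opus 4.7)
The plan is to prove this as a textbook application of the Chernoff method with the classical Bernstein moment generating function bound, so my proposal follows the standard three-step recipe: (i) bound the MGF of each centered variable, (ii) apply the exponential Markov inequality to the sum, (iii) optimize the free parameter and invert for a high-probability bound.

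First, I would center by setting $Y_i \triangleq X_i - \mu$ so that $\E[Y_i]=0$, $\var[Y_i]=\var[X_i]$, and $|Y_i|$ is bounded by a constant of order $c$. The key analytic ingredient is the Bernstein MGF estimate: for a zero-mean random variable $Y$ with $|Y|\le b$,
\begin{equation*}
\E[e^{\lambda Y}] \;\le\; \exp\!\left(\frac{\lambda^2\,\var[Y]/2}{1-b\lambda/3}\right), \qquad 0<\lambda<3/b.
\end{equation*}
This is proved by writing $e^{\lambda Y}=1+\lambda Y+\sum_{k\ge 2}\lambda^k Y^k/k!$, using $|Y|^k\le b^{k-2}Y^2$ for $k\ge 2$ so that the tail sums into a geometric series, and invoking $1+x\le e^x$.

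Next, using independence, $\E[\exp(\lambda\sum_i Y_i)]\le \exp\!\bigl(\tfrac{\lambda^2\,n\sigma^2/2}{1-b\lambda/3}\bigr)$, and Markov's inequality applied to the exponential gives, for any $t>0$,
\begin{equation*}
\P\!\left(\sum_{i=1}^n Y_i \ge t\right) \;\le\; \exp\!\left(-\lambda t + \frac{\lambda^2\,n\sigma^2/2}{1-b\lambda/3}\right).
\end{equation*}
Minimizing the exponent in $\lambda$ yields the classical Bernstein tail
\begin{equation*}
\P\!\left(\sum_{i=1}^n Y_i \ge t\right) \;\le\; \exp\!\left(-\frac{t^2}{2\bigl(n\sigma^2 + b t/3\bigr)}\right).
\end{equation*}
Setting the right-hand side equal to $\delta$ and solving the resulting quadratic inequality for $t$, then using $\sqrt{a+b}\le\sqrt{a}+\sqrt{b}$ to separate the two regimes, gives
\begin{equation*}
\frac{1}{n}\sum_{i=1}^n Y_i \;\le\; \sqrt{\frac{2\sigma^2\ln(1/\delta)}{n}} + \frac{2c\ln(1/\delta)}{3n},
\end{equation*}
after absorbing $b$ into the stated constant $c$.

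Finally, I would apply the same argument to $-Y_i$ to obtain the matching lower tail, and combine the two one-sided bounds to get the two-sided absolute-value statement. The only subtle point, and the place I would be most careful, is the constant in the final sub-exponential term: the lemma uses the hypothesis $|X_i|\le c$, which only gives $|Y_i|\le 2c$, so the naive substitution $b=2c$ in the MGF bound produces a coefficient $4c/(3n)$ rather than the stated $2c/(3n)$. I would reconcile this either by invoking the sharper centered-variable form of Bennett/Bernstein (where the natural boundedness hypothesis is on $X_i-\mu$ itself, e.g.\ via a direct Bennett-type inequality using $\psi(x)=(1+x)\ln(1+x)-x$), or by noting that the lemma is applied in the main text only up to absolute constants, so the stated form is the standard textbook Bernstein inequality. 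Either route reduces the remainder of the proof to routine algebra.
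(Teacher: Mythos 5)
The paper never proves this lemma: it is stated in Appendix~\ref{sec: technical lemmas} as an off-the-shelf tool, with the main text attributing it to the literature (\citep{cesa2006prediction}), so there is no in-paper argument to compare against. Your proposal is the canonical Chernoff-method proof and it is correct: the MGF bound $\E[e^{\lambda Y}]\le\exp\bigl(\frac{\lambda^2\var[Y]/2}{1-b\lambda/3}\bigr)$ (via $|Y|^k\le b^{k-2}Y^2$ and $k!\ge 2\cdot 3^{k-2}$), the tail $\exp\bigl(-\frac{t^2}{2(n\sigma^2+bt/3)}\bigr)$, and the inversion $t\le\sqrt{2n\sigma^2\ln(1/\delta)}+\frac{2b\ln(1/\delta)}{3}$ via the quadratic formula and $\sqrt{a+b}\le\sqrt{a}+\sqrt{b}$ are exactly the standard steps, and you are right that this route delivers the lemma as used. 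You were also correct to flag the constant: as literally stated, $|X_i|\le c$ only gives $|X_i-\mu|\le 2c$, so the naive proof yields $\frac{4c\ln(1/\delta)}{3n}$; the stated $\frac{2c\ln(1/\delta)}{3n}$ is what comes out under the usual textbook hypothesis that $c$ bounds the \emph{centered} variable $X_i-\E[X_i]$ (or, for each one-sided tail, a one-sided bound on it, as in Bennett's inequality). One further small slip you should make explicit rather than wave at: combining the two one-sided tails at overall confidence $1-\delta$ requires running each at level $\delta/2$, so the two-sided statement strictly carries $\ln(2/\delta)$ rather than $\ln(1/\delta)$. Both discrepancies are constant-factor only and are harmless where the lemma is invoked --- Equation~\eqref{eq: bernstein_1} and every subsequent concentration step in Appendix~\ref{sec: theorem} are ultimately absorbed into ${\cal O}(\cdot)$ bounds --- so your proof, with the hypothesis read in the centered form, establishes everything the paper actually needs.
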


\begin{lemma} ~\label{lemma: quadratic inequality} Let $a, b, c >0$, we have 
\begin{equation}
    0 \leq x \leq \sqrt{ax + b} + c \Rightarrow 
    x \leq  a + \sqrt{a^2 + 2 (b + c^2)}
\end{equation}
\end{lemma}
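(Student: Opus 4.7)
The plan is to square both sides of the hypothesis and eliminate the remaining square root via a straightforward AM-GM inequality. Since $x \geq 0$ and $\sqrt{ax+b} + c > 0$ (as $c > 0$ and $b \geq 0$), squaring the inequality $x \leq \sqrt{ax+b} + c$ preserves the direction, giving
\begin{equation*}
x^{2} \leq (ax + b) + 2c\sqrt{ax + b} + c^{2}.
\end{equation*}

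Next, I would bound the cross term $2c\sqrt{ax+b}$ by AM-GM applied to the two non-negative quantities $ax + b$ and $c^{2}$: namely, $2c\sqrt{ax+b} = 2\sqrt{(ax+b)\cdot c^{2}} \leq (ax+b) + c^{2}$. Substituting this into the squared inequality collapses the expression to
\begin{equation*}
x^{2} \leq 2(ax + b) + 2c^{2} = 2ax + 2b + 2c^{2},
\end{equation*}
which, after completing the square on the right, rearranges to $(x - a)^{2} \leq a^{2} + 2(b + c^{2})$. Taking square roots (and discarding the irrelevant lower branch of the quadratic) yields $x \leq a + \sqrt{a^{2} + 2(b + c^{2})}$, which is exactly the claim.

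The only subtlety worth tracking is that each step preserves the direction of the inequality: the initial squaring is legitimate because $x$ and the right-hand side are non-negative, the AM-GM step is an unconditional inequality for positive reals, and passing from $(x-a)^{2} \leq a^{2} + 2(b+c^{2})$ to the claimed upper bound simply uses that $x - a \leq |x - a| \leq \sqrt{a^{2} + 2(b+c^{2})}$. There is no real obstacle here; the key trick is simply recognizing that the cross term produced by squaring is precisely the one amenable to AM-GM, producing a clean quadratic in $x$.
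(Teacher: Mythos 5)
Your proof is correct and takes essentially the same route as the paper's: your AM-GM bound on the cross term $2c\sqrt{ax+b}$ is exactly the inequality $(u+v)^2 \leq 2u^2 + 2v^2$ (with $u = \sqrt{ax+b}$, $v = c$) that the paper applies after squaring, and both arrive at the same intermediate bound $x^2 \leq 2(ax+b) + 2c^2$. Resolving the resulting quadratic by completing the square rather than by computing the roots of $0.5x^2 - ax - (b+c^2)$ is a purely cosmetic difference (and in fact lets you skip the explicit appeal to $x \geq 0$ at the last step, since $x - a \leq |x-a|$ suffices).
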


\begin{proof}
\begin{align*}
    0 \leq x \leq \sqrt{ax + b} + c & \Rightarrow 
    x^2 \leq (\sqrt{ax + b} + c)^2 \\
    & \Rightarrow  x^2 \leq 2 (ax + b) + 2 c^2 \tag{ $(a+b)^2 \leq 2a^2 + 2 b^2$}\\
    & \Rightarrow  0.5 x^2 - ax -( b - c^2) \leq 0
\end{align*}
the polynomial $P(x) = 0.5 x^2 - ax - (b +c^2)$ has two solutions 
$x_1 = a - \sqrt{a^2 + 2 (b + c^2)}$ and $x_2 = a + \sqrt{a^2 + 2 (b + c^2)}$. $P(x) \leq 0$ and $x \geq 0$ implies that $x \leq x_2$
\end{proof}
\end{document}